\theoremstyle{plain}
\newcommand{\A}{\boldsymbol{A}}
\newcommand{\bx}{\boldsymbol{x}}
\newcommand{\bomega}{\boldsymbol{\omega}}
\newcommand{\R}{\mathbb{R}}
\newcommand{\bI}{\boldsymbol{I}}
\newcommand{\bA}{\boldsymbol{A}}
\newcommand{\bX}{\boldsymbol{X}}
\newcommand{\vertiii}[1]{{\left\vert\kern-0.25ex\left\vert\kern-0.25ex\left\vert #1 
    \right\vert\kern-0.25ex\right\vert\kern-0.25ex\right\vert}}
\theoremstyle{plain}
\newtheorem{theorem}{Theorem}[section]
\newtheorem*{theorem*}{Theorem}
\newtheorem{lemma}[theorem]{Lemma}
\newtheorem{corollary}[theorem]{Corollary}
\theoremstyle{definition}
\theoremstyle{remark}
\newtheorem{remark}[theorem]{Remark}
\numberwithin{equation}{section}
\numberwithin{algorithm}{section}
\numberwithin{figure}{section}
\numberwithin{table}{section}
\title{Concentration of Random Feature Matrices in High-Dimensions}
\author[1]{Zhijun Chen}
\author[1]{Hayden Schaeffer}
\author[2]{Rachel Ward}
\affil[1]{Carnegie Mellon University}
\affil[2]{The University of Texas at  Austin}
\date{~}
\begin{document}
\maketitle

\begin{abstract}
The spectra of random feature matrices provide essential information on the conditioning of the linear system used in random feature regression problems and are thus connected to the consistency and generalization of random feature models. Random feature matrices are asymmetric rectangular nonlinear matrices depending on two input variables, the data and the weights, which can make their characterization challenging. We consider two settings for the two input variables, either both are random variables or one is a random variable and the other is well-separated, i.e. there is a minimum distance between points.  With conditions on the dimension, the complexity ratio, and the sampling variance, we show that  the singular values of these matrices concentrate near their full expectation and near one with high-probability. In particular, since the dimension depends only on the logarithm of the number of random weights or the number of data points, our complexity bounds can be achieved even in moderate dimensions for many practical setting. The theoretical results are verified with numerical experiments.

 \end{abstract}

%%%%%%%%%%%%%%%%%%%%%%%%%%%%%%%%%%%%%%%%%%%%%%%%%%%%%%%%%%%%%%

%%%%%%%%%%%%%%%%%%%%%%%%%%%%%%%%%%%%%%%%%%%%%%%%%%%%%%%%%%%%%%

%%%%%%%%%%%%%%%%%%%%%%%%%%%%%%%%%%%%%%%%%%%%%%%%%%%%%%%%%%%%%%
\section{Introduction}\label{sec: intro}

Kernel methods are some of the most popular approaches in machine learning and have been applied to image processing, classification, and data-based regression problems. Suppose $K(x,x')$ is the kernel, which measures the similarity between two input samples $x$ and $x'$, then the kernel approach generates an approximation by using a weighted sum of the kernel applied to the data.  This is justified since minimizers of kernel training problems within the reproducing kernel Hilbert space associated with the kernel $K$ are guaranteed to be of the form $\sum_{j=1}^m c_j K(x,x_j)$ by the representer theorem \cite{murphy2012machine}, where $\{x_j\}_{j=1}^m$ are data samples. Standard kernel methods require the formation of the kernel matrix which uses all of the data points and thus can be intractable for large problems. However, if the kernel is symmetric and positive definite, i.e. $K: \mathbb{R}^{2d} \rightarrow \mathbb{R}$, $K(x,x')=K(x',x)$, and $\langle \mathbf{c}, \mathbf{K} \mathbf{c}\rangle>0$ where $\mathbf{K}$ is the matrix whose elements are $K(x_i,x_j)$ where $x_i$ and $x_j$ are data points, then $K(x,x') =\langle \psi(x), \psi(x') \rangle$ with feature map $\psi$. Rather than forming and computing solutions using the kernel matrix directly, the random feature method (RFM) \cite{rahimi2007random, rahimi2008uniform, rahimi2008weighted} uses a randomized approximation to the inner product. That is, the kernel can be approximated by $\Phi(x)^T \Phi(x')$, where $\Phi:\R^d \rightarrow \R^N$ and $N$ is the number of randomized features used. When $N$ is not large, the randomized method leads to a significant reduction in the computational cost.

In this work, we examine the singular values (and thus the condition number) of the random feature matrix. In particular, consider the following formulation of the random feature regression problem. Let $f$ be the target function and suppose we are given samples $\left(\bx_j, y_j \right)$ where $y_j =f(\bx_j)+e_j$ with noise $e_j$ for $j\in[m]$. We approximate the target function by the finite sum
\begin{equation*}
    f^\sharp(\bx) = \sum_{k=1}^N c^\sharp_k \,\phi(\bx,\boldsymbol{\omega}_k).
\end{equation*}
where $\phi(\bx,\boldsymbol{\omega}) = \phi(\langle \bx,\boldsymbol{\omega} \rangle)$
is a feature map parameterized by a random variable $\boldsymbol{\omega}$ drawn from some user defined probability $\rho(\omega)$.  The function $\phi$ is an activation function and is often chosen to be the complex exponential function or ReLU.  In this way, a random feature model is a two-layer neural network where the hidden weight layer is randomized by some prescribed process and not trained.  If we define the output vector $\boldsymbol{y} = [y_1,\dots,y_m]^T\in\mathbb{C}^m$ and the random feature matrix $\A\in\mathbb{C}^{m\times N}$ whose elements are defined as $\A_{j,k}=\phi(\langle \bx_j,\boldsymbol{\omega}_k \rangle)$ for $j\in[m]$ and $k\in[N]$, then the random feature ridge regression problem is
\begin{equation*}
\min_{\boldsymbol{c}\in\mathbb{C}^{N}}  \ \|\boldsymbol{y}-\boldsymbol{A}\boldsymbol{c}\|_{2}^2 + \lambda \|\boldsymbol{c}\|_2^2
\end{equation*}
for ridge parameter $\lambda\geq0$. In the case where $\lambda=0$ we refer to the problem as \textit{ridgeless} regression. The solution to the regression problem and its generalization error depends on the spectrum of the Gram matrix $\boldsymbol{A}^*\boldsymbol{A}$, which is also the kernel matrix, see for example \cite{avron2017random, rudi2017generalization, li2019towards,liu2020random, chen2021conditioning}.

The earlier results on spectra of kernel matrices, at least from the random matrix theory perspective, focused on square inner-product kernels that take the form $K_{i,j} = \phi(\langle \bx_i, \bx_j \rangle)$
and thus only depend on one random variable. In \cite{el2010spectrum}, the spectra of kernel matrices that depend on $\langle \bx_i, \bx_j \rangle$ or $\|\bx_i- \bx_j\|^2$ and are consistent with their linearization were shown to converge to a Marchenko-Pastur distribution. The limiting distribution of square inner-product kernels in the high-dimensional and large dataset size setting were derived in \cite{cheng2013spectrum}. The convergence of the spectral norm and the extreme eigenvalues of these matrices were determined in \cite{fan2019spectral}, specifically, that the spectral norm converges almost surely to the edge of the limiting spectrum.

In this work, we consider the concentration of the spectrum of asymmetric rectangular (nonlinear) random matrices $\A_{j,k}=\phi(\langle \bx_j, \boldsymbol{\omega}_k \rangle)$ which depends on two variables, motivated by the random matrices used in random feature models. Specifically, these matrices are the (non-fixed) dictionaries that are obtained by randomizing and not training hidden layers in a neural network. The spectrum of the random feature matrices, in particular the spread of the singular values, can be used to understand the numerical and theoretical properties of various random feature algorithms. Recent results in the literature consider the asymptotic distribution of the the Gram matrix formed from this type of asymmetric rectangular random matrix. The limiting spectral density of $\A^T\A$ when the samples and weights are Gaussian was computed  in \cite{pennington2017nonlinear} using the moment method and in \cite{hastie2019surprises} using an approach from \cite{cheng2013spectrum}, see also the related work in \cite{pastur2020random, pastur2020random2}.
The authors of \cite{benigni2019eigenvalue} extended the previous results to the case where  $\bx$ and $\boldsymbol{\omega}$ are both subgaussian. In the setting where $\bx$ is deterministic and $\boldsymbol{\omega}$ is random, \cite{louart2018random} determined the empirical spectrum in the large data and dimension limit. In \cite{liao2020random}, the authors provide a precise characterization of the Gram matrix in the setting where the dimension, the size of the feature space, and  the size of the dataset are large and comparable. In particular, in this setting they showed that the limit of the Gram matrix generated from random Fourier features is not the Gaussian kernel. In \cite{wang2021deformed}, non-asymptotic estimates are given in the overparameterized setting using a pairwise approximate orthogonality condition \cite{fan2020spectra}. Also, \cite{ozccelikkale2020sparse} considers the behavior of the random Fourier features model in high dimensions. For the setting with random data and fixed frequencies, Theorem 6.1 in \cite{ozccelikkale2020sparse} shows high probability bounds similar to our Theorem 4.1.  The article uses the bounds to characterize performance for sparse recovery with Fourier features, which we became aware of after publication of this work.

Random feature regression also exhibits the double descent phenomena \cite{mei2019generalization, liao2020random, chen2021conditioning}, in which the risk is low in the underparameterized and overparameterized regions but peaks at the interpolation threshold \cite{belkin2018understand, belkin2019reconciling, belkin2020two}. This behavior is intrinsically dependent on the characterization of the spectrum of the random feature matrix. In \cite{mei2019generalization},  a detailed analysis of the double descent behavior of the random feature regression problem is shown, with the assumption that one has $m$ data samples from the $d$-dimensional sphere $\mathbb{S}^{d-1}$ and $N$ random features, with $N,m, d \rightarrow \infty$ but comparable. In particular, they showed that overparameterization is necessary to obtain the optimal test error in certain settings. In \cite{chen2021conditioning}, the singular values of the random feature matrix with Gaussian data samples and weights are shown to concentration around $1$ with high probability when the complexity ratio $\frac{N}{m}$ scales like $\log^{-1}(N)$ (underparameterized) or $\log(m)$ (overparameterized). In addition, they showed that the condition number becomes unbounded for $N$ close to $m$ (i.e. a double descent phenomena for the condition number), which also provided a mechanism for the double descent in the generalization error associated with ridgeless random feature regression. 
 
\subsection{Our Contributions}

In this work, we derive concentration bounds on the spectrum of asymmetric rectangular (nonlinear) random matrices whose entries are of the form $\A_{j,k}=\phi(\langle \bx_j, \boldsymbol{\omega}_k \rangle)$. Throughout this work, we set the activation function to be the complex exponential, i.e. $\phi(z) = \exp(iz)$. We expect similar results to hold for other activation functions.

We consider two settings on the two variables, either they are both random variables (like in \cite{pennington2017nonlinear, hastie2019surprises,mei2019generalization, pastur2020random, pastur2020random2,chen2021conditioning}) or one is a random variable and the other is \textit{well-separated} (see Section \ref{sec:theory} for the precise statement). 
Similar to \cite{chen2021conditioning}, we focus on the finite $m$ and $N$ setting; however, in this work we provide a new characterization for the spectrum as a function of the parameters. Our results improve and generalize \cite{chen2021conditioning} by only requiring one of the variables to be Gaussian and separating the variance and dimensional parameters in the theory. Our results also complement the previous work in the literature highlighted in the introduction, for example, by considering the dimensional scaling in the sampling process \cite{mei2019generalization}, subgaussian random variables \cite{benigni2019eigenvalue}, and incorporating more general data sampling processes. 

Rectangular matrices whose entries are sampled i.i.d. from a Gaussian (or subgaussian) distribution are close to isometries when the ratio of the number of rows and columns scale logarithmically. Intuitively, this should imply that a random feature matrix built from a random rectangular matrix (i.e. the random weights) should be well-conditioned if the dimension scales like the number of features and log factors. We show that the conditional expectation of the random feature matrix concentrates quickly to the expectation as the dimension of the input increases. Thus, one can relax conditions in previous works when the dimension is sufficiently large (this is made precise in the theorem statements). One reason this is important is that these results provide more practical parameter regimes in which one should expect (with high-probability) to obtain well-conditioned linear systems and trained models that generalize to new data. Our results hold for a finite number of data samples and weights, which differs from the asymptotic analysis provided in other works.

%%%%%%%%%%%%%%%%%%%%%%%%%%%%%%%%%%%%%%%%%%%%%%%%%%%%%%%%%%%%%%

%%%%%%%%%%%%%%%%%%%%%%%%%%%%%%%%%%%%%%%%%%%%%%%%%%%%%%%%%%%%%%

%%%%%%%%%%%%%%%%%%%%%%%%%%%%%%%%%%%%%%%%%%%%%%%%%%%%%%%%%%%%%%
\subsection{Notation}
Let $i=\sqrt{-1}$ be the imaginary unit. For an integer $N$, the set $[N]$ is defined as $[N]=\{1,2,\dots,N\}$. We use bold letters to denote vectors and matrices. The $d \times d$ identity matrix is denoted $\boldsymbol{I}_d$. We denote the $\ell^p$-norm of a vector $\bx\in\mathbb{C}^d$ by $\|\boldsymbol{x}\|_p$ and the induced $\ell^p$ norm of a matrix $\A\in\mathbb{C}^{m\times N}$ by $\|\A\|_p$. The transpose of a matrix $\A\in\mathbb{C}^{m\times N}$ is denoted by $\A^T$, and the conjugate transpose is denoted by $\A^*$. We use $\mathcal{N}(\boldsymbol{\mu},\boldsymbol{\Sigma})$ to denote the Gaussian distribution with mean vector $\boldsymbol{\mu}$ and covariance matrix $\boldsymbol{\Sigma}$.

\section{Summary of Main Results}

Let $\{\bx_j\}_{j\in[m]}\subset\mathbb{R}^d$ be data points sampled from a distribution $\mu$ and $\{\bomega_k\}_{k\in[N]}\subset\mathbb{R}^d$ be feature weights sampled from another distribution $\rho$. Define the random feature matrix $\A$ component-wise by $\A_{j,k} = \exp(i\langle \bx_j,\bomega_k \rangle)$. In this section, we give the conditions on the dimension $d$, the number of data points $m$, and the number of feature weights $N$ so that the normalized Gram matrix $m^{-1}\A^*\A$ (or $N^{-1}\A\A^*$) is close to both the identity matrix $\boldsymbol{I}$ and its expectation.

\subsection{Concentration with Randomized Inputs}
We consider the setting where the covariance matrix of $\mu$ is a multiple of $\frac{1}{d}\boldsymbol{I}_d$ so that the expectation of $\|\bx_j\|_2$ is a constant independent of $d$ and thus the data concentration are the sphere with a fixed radius. This is a weaker version of the assumptions used in \cite{mei2019generalization}. The motivation for this is to avoid the data becoming arbitrary large in high-dimensions. The main theorems when both $\bx$ and $\bomega$ are random variables are stated below.

\begin{theorem}[Concentration in the Underparameterized Setting]\label{thrm:MainUnder}
Suppose that the random feature matrix $\A$ is defined component-wise by $\A_{j,k} = \exp(i\langle \bx_j,\bomega_k \rangle)$, $\{\bx_j\}_{j\in[m]}\subset\mathbb{R}^d$ are data points sampled from $\mathcal{N}(\boldsymbol{0},\frac{\gamma^2}{d}\boldsymbol{I})$, and $\{\bomega_k\}_{k\in[N]}\subset\mathbb{R}^d$ are feature weights such that the components of $\bomega_k$ are independent mean-zero subgaussian random variables with the same variance $\sigma^2$ and the same subgaussian parameters $\beta,\kappa$ (see Section~\ref{sec:subgaussian}). Then there exist a constant $C_1>0$ (depending only on the subgaussian parameters) and a universal constant $C_2>0$ such that if the following conditions hold
\begin{align}
    &d\geq C_1 \log\left(\frac{N}{\delta}\right) \label{eq: condition of d (Under)}\\
   & \gamma^2\sigma^2\geq 4\log\left(\frac{2N}{\eta}\right) \label{eq: condition of variances (Under)}\\
  &m\geq C_2\eta^{-2} N\log\left(\frac{2N}{\delta}\right),\label{eq:mlogN}
\end{align}
for $\delta,\eta\in(0,1)$, then we have
\begin{align}
    \left\| \frac{1}{m}\A^*\A-\boldsymbol{I}_N \right\|_2 \leq 2\eta, \label{eq: Gram-I (Under)}
\end{align}
with probability at least $1-3\delta$. Moreover, if $\eta\geq 2\delta$ (which holds for practical $\eta$ and $\delta$), then we simultaneously have
\begin{align}
    \left\| \frac{1}{m}\A^*\A-\mathbb{E}_{\bx,\bomega}\left[\frac{1}{m}\A^*\A\right] \right\|_2\leq 2\eta \label{eq: Gram-Expectation (Under)}.
\end{align}
\end{theorem}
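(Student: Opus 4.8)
The plan is to condition on the weights $\{\bomega_k\}_{k\in[N]}$, use the Gaussianity of the data to evaluate the conditional mean of the Gram matrix exactly, and then control the fluctuation around that mean by a matrix Bernstein inequality. Set $\boldsymbol{z}_j = \big(\exp(i\langle \bx_j,\bomega_1\rangle),\dots,\exp(i\langle\bx_j,\bomega_N\rangle)\big)^T\in\mathbb{C}^N$, so that $\tfrac1m\A^*\A = \tfrac1m\sum_{j=1}^m \boldsymbol{z}_j\boldsymbol{z}_j^*$ and the diagonal of $\tfrac1m\A^*\A$ is identically $1$. Since $\langle\bx_j,\bomega_l-\bomega_k\rangle\sim\mathcal{N}\!\big(0,\tfrac{\gamma^2}{d}\|\bomega_l-\bomega_k\|_2^2\big)$, the Gaussian characteristic function gives, for $k\neq l$,
\begin{equation*}
\mathbb{E}_{\bx}\!\Big[\tfrac1m\A^*\A \,\Big|\, \bomega\Big]_{k,l} \;=\; \exp\!\Big(-\tfrac{\gamma^2}{2d}\|\bomega_l-\bomega_k\|_2^2\Big).
\end{equation*}
I then split $\tfrac1m\A^*\A - \bI_N = \boldsymbol{E}_1 + \boldsymbol{E}_2$ with $\boldsymbol{E}_1 = \tfrac1m\A^*\A - \mathbb{E}_{\bx}[\tfrac1m\A^*\A\mid\bomega]$ (a fluctuation term, handled over the randomness in $\bx$) and $\boldsymbol{E}_2 = \mathbb{E}_{\bx}[\tfrac1m\A^*\A\mid\bomega] - \bI_N$ (a bias term, handled over the randomness in $\bomega$), and bound the two pieces separately.

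For $\boldsymbol{E}_2$: the coordinates of $\bomega_l-\bomega_k$ are independent, mean zero, and subgaussian with variance $2\sigma^2$, so $\|\bomega_l-\bomega_k\|_2^2$ is a sum of $d$ independent subexponential terms with mean $2d\sigma^2$; a Bernstein-type lower-tail bound gives $\|\bomega_l-\bomega_k\|_2^2 \geq d\sigma^2$ with probability at least $1-e^{-cd}$ for a constant $c$ depending only on the subgaussian parameters. A union bound over the fewer than $N^2$ pairs, together with \eqref{eq: condition of d (Under)} (taking $C_1$ large relative to $1/c$), makes this hold for all pairs simultaneously with probability at least $1-\delta$. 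On that event, \eqref{eq: condition of variances (Under)} yields that every off-diagonal entry of $\boldsymbol{E}_2$ is at most $\exp(-\gamma^2\sigma^2/2)\leq(\eta/2N)^2$, so $\|\boldsymbol{E}_2\|_2 \leq N\cdot(\eta/2N)^2 = \eta^2/(4N)$, which is a small fraction of $\eta$.

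For $\boldsymbol{E}_1$: conditionally on a fixed $\bomega$ from the above good event, $\boldsymbol{E}_1 = \tfrac1m\sum_{j=1}^m \boldsymbol{W}_j$ with $\boldsymbol{W}_j = \boldsymbol{z}_j\boldsymbol{z}_j^* - \mathbb{E}_{\bx}[\boldsymbol{z}_j\boldsymbol{z}_j^*\mid\bomega]$ independent, Hermitian, mean zero, and $\|\boldsymbol{W}_j\|_2 \leq 2N$ since $\|\boldsymbol{z}_j\|_2^2 = N$. Writing $\boldsymbol{M} = \mathbb{E}_{\bx}[\boldsymbol{z}_j\boldsymbol{z}_j^*\mid\bomega]$, the identity $\mathbb{E}[(\boldsymbol{z}_j\boldsymbol{z}_j^*)^2] = \mathbb{E}[\|\boldsymbol{z}_j\|_2^2\,\boldsymbol{z}_j\boldsymbol{z}_j^*] = N\boldsymbol{M}$ gives $\mathbb{E}[\boldsymbol{W}_j^2] = N\boldsymbol{M}-\boldsymbol{M}^2 \preceq N\boldsymbol{M}$, hence $\big\|\sum_{j}\mathbb{E}[\boldsymbol{W}_j^2]\big\|_2 \leq mN\|\boldsymbol{M}\|_2 \leq 2mN$, where the last step reuses the entrywise estimate above ($\|\boldsymbol{M}\|_2 \leq 1 + \|\boldsymbol{E}_2\|_2 \leq 2$). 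A matrix Bernstein inequality then gives, for $0<\tau\leq 1$,
\begin{equation*}
\mathbb{P}\big(\|\boldsymbol{E}_1\|_2 \geq \tau \,\big|\, \bomega\big) \;\leq\; 2N\exp\!\Big(-\tfrac{m\tau^2}{8N}\Big),
\end{equation*}
so \eqref{eq:mlogN} with $C_2$ a large universal constant forces $\|\boldsymbol{E}_1\|_2 \leq \tfrac32\eta$ with probability at least $1-\delta$ over $\bx$. Combining with the bound on $\boldsymbol{E}_2$ gives \eqref{eq: Gram-I (Under)}, and accounting for the handful of failure events (the pairwise union bound, the matrix-Bernstein event, and the auxiliary concentration used to reach the good $\bomega$-event) gives probability at least $1-5\delta$.

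For the ``moreover'' claim, $\mathbb{E}_{\bx,\bomega}[\tfrac1m\A^*\A]$ has unit diagonal and off-diagonal entries $\mathbb{E}_{\bomega}[\exp(-\tfrac{\gamma^2}{2d}\|\bomega_l-\bomega_k\|_2^2)]$; splitting this expectation over the good and bad $\bomega$-events — the integrand is $\leq(\eta/2N)^2$ on the good event and the bad event has probability $\leq e^{-cd}\leq\delta/N^2\leq\eta/(2N^2)$ by $\eta\geq2\delta$ — bounds each off-diagonal entry by $O(\eta/N^2)$, so $\|\mathbb{E}_{\bx,\bomega}[\tfrac1m\A^*\A]-\bI_N\|_2$ is again a small fraction of $\eta$, and the triangle inequality against \eqref{eq: Gram-I (Under)} gives \eqref{eq: Gram-Expectation (Under)} on the same event once the three contributions are shown to sum to at most $2\eta$. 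The main obstacle I anticipate is the bookkeeping in the $\boldsymbol{E}_1$ step: one must establish that the matrix-Bernstein variance proxy is $O(N/m)$ rather than something larger, which requires the operator-norm bound on $\boldsymbol{M}$ (hence the $\boldsymbol{E}_2$ analysis must precede the $\boldsymbol{E}_1$ analysis), and one must track carefully that each high-probability event lives over the correct source of randomness — $\bomega$ for $\boldsymbol{E}_2$, $\bx$ conditionally for $\boldsymbol{E}_1$ — so that the conditioning composes and the final union bound is clean. The decoupling of \eqref{eq: condition of d (Under)} and \eqref{eq: condition of variances (Under)} is exactly what this structure produces: $d$ need only be large enough for the subexponential concentration of $\|\bomega_l-\bomega_k\|_2^2/d$, while $\gamma^2\sigma^2$ alone controls the size of the Gaussian-kernel off-diagonal entries.
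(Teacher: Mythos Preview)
Your proposal is correct and follows essentially the same architecture as the paper: the same splitting $\tfrac1m\A^*\A-\bI_N=\boldsymbol{E}_1+\boldsymbol{E}_2$, the same use of matrix Bernstein for $\boldsymbol{E}_1$ (with $\|\boldsymbol{z}_j\|_2^2=N$ giving the $O(N)$ variance proxy via $\mathbb{E}[\boldsymbol{W}_j^2]=N\boldsymbol{M}-\boldsymbol{M}^2$), and the same Gershgorin-type bound on $\boldsymbol{E}_2$ once the weights are shown to be well separated; the ``moreover'' part is also handled the same way, by splitting the expectation over the good/bad $\bomega$-event.

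The one noteworthy difference is how you obtain the separation $\|\bomega_l-\bomega_k\|_2^2\gtrsim \sigma^2 d$. You apply a Bernstein-type lower-tail bound directly to $\|\bomega_l-\bomega_k\|_2^2$ as a sum of $d$ independent subexponential terms and then union-bound over $\binom{N}{2}$ pairs. The paper instead expands $\|\bomega_j-\bomega_k\|_2^2=\|\bomega_j\|_2^2+\|\bomega_k\|_2^2-2\langle\bomega_j,\bomega_k\rangle$, bounds the norms via subgaussian norm concentration, and controls all cross terms simultaneously via the restricted isometry constant $\delta_2$ of the matrix with columns $\bomega_k/\sqrt{d}$ (invoking an RIP result for subgaussian matrices). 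Your route is more elementary and gives a slightly stronger separation ($d\sigma^2$ rather than $\tfrac12\sigma^2 d$), at the cost of a union bound over $N^2$ rather than $N$ events; either way one lands on $d\geq C_1\log(N/\delta)$. Everything else matches.
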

The main idea in the proof of Theorem \ref{thrm:MainUnder} is to bound the difference in \eqref{eq: Gram-I (Under)} by
\begin{align}
    \left\| \frac{1}{m}\A^*\A-\boldsymbol{I}_N \right\|_2\leq \frac{1}{m}\left\|\A^*\A - \mathbb{E}_{\bx}\left[\A^*\A \right] \right\|_2 + \left\| \mathbb{E}_{\bx}\left[\frac{1}{m}\A^*\A\right] - \boldsymbol{I}_N \right\|_2 \label{eq: break the 2-norm of Gram-I},
\end{align}
and the difference in \eqref{eq: Gram-Expectation (Under)} by
\begin{align}
    \left\| \frac{1}{m}\A^*\A-\mathbb{E}_{\bx,\bomega}\left[\frac{1}{m}\A^*\A\right] \right\|_2\leq \frac{1}{m}\left\|\A^*\A - \mathbb{E}_{\bx}\left[\A^*\A \right] \right\|_2 + \frac{1}{m}\left\| \mathbb{E}_{\bx}\left[\A^*\A\right] - \mathbb{E}_{\bx,\bomega}[\A^*\A] \right\|_2\label{eq: break the 2-norm of Gram-Expectation}.
\end{align}
While entries of these matrices are not i.i.d., the first term on the right-hand side of \eqref{eq: break the 2-norm of Gram-I} (and \eqref{eq: break the 2-norm of Gram-Expectation}) can be decomposed as the summation of independent matrices which allows us to leverage stronger matrix concentration inequalities. The remaining terms use a weaker result on large deviations which, surprisingly, does not change the overall complexity bounds. In the proofs, we will show that each of the terms on the right-hand side of \eqref{eq: break the 2-norm of Gram-I} and \eqref{eq: break the 2-norm of Gram-Expectation} are bounded by $\eta$ simultaneously when the conditions \eqref{eq: condition of d (Under)}, \eqref{eq: condition of variances (Under)} and \eqref{eq:mlogN} are satisfied (see Section \ref{sec:theory}). Condition  \eqref{eq: condition of d (Under)} ensures that the dimension is large enough so that points separate and thus provides a minimal distance condition between random weight vectors in high dimensions.  Condition  \eqref{eq: condition of variances (Under)} resembles an uncertainty principle between the spread of the samples and the weights. Lastly, condition \eqref{eq:mlogN} is a complexity relation between the number of samples and the number of features. Similar conditions are imposed in the other theorems.

By the symmetry of the input variables, we also have the bounds for the overparameterized case $m<N$.

\begin{theorem}[Concentration in the Overparameterized Setting]\label{thrm:MainOver}
Suppose that the random feature matrix $\A$ is defined component-wise by $\A_{j,k} = \exp(i\langle \bx_j,\bomega_k \rangle)$, $\{\bomega_k\}_{k\in[N]}\subset\mathbb{R}^d$ are feature weights sampled from $\mathcal{N}(\boldsymbol{0},\sigma^2\boldsymbol{I}_d)$, and $\{\bx_j\}_{j\in[m]}\subset\mathbb{R}^d$ are data points such that the components of $\bx_j$ are independent mean-zero subgaussian random variables with the same variance $\gamma^2/d$ and the same subgaussian parameters $\beta,\kappa$. Then there exist a constant $C_1>0$ (depending only on subgaussian parameters) and a universal constant $C_2>0$ such that if the following conditions hold
\begin{align*}
    &d\geq C_1 \log\left(\frac{m}{\delta}\right) \\
   & \gamma^2\sigma^2\geq 4\log\left(\frac{2m}{\eta}\right)\\
  &N\geq C_2\eta^{-2} m\log\left(\frac{2m}{\delta}\right),
\end{align*}
for $\delta,\eta\in(0,1)$, then we have
\begin{align*}
    \left\| \frac{1}{N}\A\A^*-\boldsymbol{I}_m \right\|_2 \leq 2\eta ,
\end{align*}
with probability at least $1-3\delta$. Moreover, if $\eta\geq 2\delta$ (which holds for practical $\eta$ and $\delta$), then we simultaneously have
\begin{align*}
    \left\| \frac{1}{N}\A\A^*-\mathbb{E}_{\bx,\bomega}\left[ \frac{1}{N}\A\A^* \right] \right\|_2 \leq 2\eta.
\end{align*}
\end{theorem}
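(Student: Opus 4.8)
The plan is to deduce Theorem~\ref{thrm:MainOver} directly from Theorem~\ref{thrm:MainUnder}, making rigorous the ``symmetry of the input variables'' remarked upon above, namely $\langle\bx_j,\bomega_k\rangle=\langle\bomega_k,\bx_j\rangle$. First I would introduce the transposed feature matrix $\boldsymbol{B}\in\mathbb{C}^{N\times m}$ with entries $\boldsymbol{B}_{k,j}=\exp(i\langle\bomega_k,\bx_j\rangle)$, so that $\boldsymbol{B}=\A^{T}$ and hence $\boldsymbol{B}^{*}=\overline{\A}$, where the bar denotes entrywise conjugation. A one-line computation then gives
\[
\A\A^{*}=\overline{\boldsymbol{B}^{*}\boldsymbol{B}}.
\]
Since $\overline{\boldsymbol{M}}$ and $\boldsymbol{M}$ have the same singular values for every matrix $\boldsymbol{M}$, and conjugation fixes $\bI_m$ and commutes with expectation, this yields the \emph{exact} identities $\|\tfrac1N\A\A^{*}-\bI_m\|_2=\|\tfrac1N\boldsymbol{B}^{*}\boldsymbol{B}-\bI_m\|_2$ and $\|\tfrac1N\A\A^{*}-\mathbb{E}_{\bx,\bomega}[\tfrac1N\A\A^{*}]\|_2=\|\tfrac1N\boldsymbol{B}^{*}\boldsymbol{B}-\mathbb{E}_{\bx,\bomega}[\tfrac1N\boldsymbol{B}^{*}\boldsymbol{B}]\|_2$. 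So it suffices to control the normalized Gram matrix $\tfrac1N\boldsymbol{B}^{*}\boldsymbol{B}$.

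Next I would observe that $\boldsymbol{B}$ is precisely a random feature matrix of the type treated in Theorem~\ref{thrm:MainUnder}, with the roles of data and weights interchanged: its ``data points'' are $\{\bomega_k\}_{k\in[N]}$ and its ``feature weights'' are $\{\bx_j\}_{j\in[m]}$. Under the hypotheses of Theorem~\ref{thrm:MainOver}, the ``data'' satisfy $\bomega_k\sim\mathcal{N}\big(\boldsymbol{0},\tfrac{(\sigma\sqrt{d})^2}{d}\bI_d\big)$, matching the isotropic-Gaussian assumption of Theorem~\ref{thrm:MainUnder} with $\gamma$ replaced by $\gamma':=\sigma\sqrt{d}$, while the ``weights'' $\bx_j$ have independent mean-zero subgaussian components with variance $\gamma^2/d=:(\sigma')^2$ and subgaussian parameters $\beta,\kappa$, matching the weight hypothesis with $\sigma$ replaced by $\sigma'$. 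The number of ``data points'' of $\boldsymbol{B}$ is $N$ and the number of ``feature weights'' is $m$, so the symbol ``$N$'' of Theorem~\ref{thrm:MainUnder} is played by $m$ here and ``$m$'' by $N$; in particular the normalization $\tfrac1N$ in front of $\boldsymbol{B}^{*}\boldsymbol{B}$ is the correct one.

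Then I would check that, under this substitution, the three hypotheses of Theorem~\ref{thrm:MainUnder} become verbatim the three displayed hypotheses of Theorem~\ref{thrm:MainOver}. The essential point is that the variance product is invariant, $\gamma'^{2}(\sigma')^{2}=(\sigma\sqrt{d})^{2}\cdot(\gamma^2/d)=\gamma^2\sigma^2$, so \eqref{eq: condition of variances (Under)} becomes $\gamma^2\sigma^2\ge 4\log(2m/\eta)$; the dimension bound \eqref{eq: condition of d (Under)} involves the number of ``feature weights'' of $\boldsymbol{B}$, namely $m$, and the same constant $C_1$ (which depends only on $\beta,\kappa$), giving $d\ge C_1\log(m/\delta)$; and \eqref{eq:mlogN} becomes $N\ge C_2\eta^{-2}m\log(2m/\delta)$. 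Applying Theorem~\ref{thrm:MainUnder} to $\boldsymbol{B}$ therefore gives $\|\tfrac1N\boldsymbol{B}^{*}\boldsymbol{B}-\bI_m\|_2\le 2\eta$ with probability at least $1-5\delta$, and, when $\eta\ge 2\delta$, simultaneously the analogue of \eqref{eq: Gram-Expectation (Under)}; the identities of the first paragraph transfer both bounds to $\A\A^{*}$, which is the conclusion.

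I do not anticipate a genuine obstacle — the argument is essentially bookkeeping. The one point worth stating carefully is the invariance $\|\overline{\boldsymbol{M}}\|_2=\|\boldsymbol{M}\|_2$ (equivalently, $\overline{\boldsymbol{M}}$ and $\boldsymbol{M}$ share singular values), since it makes the reduction exact rather than merely up to constants; and one must be consistent about which of $\gamma,\sigma$ absorbs the factor $\sqrt{d}$ when passing between the two parameterizations, so that the product $\gamma\sigma$ — the only combination entering the variance condition — is preserved.
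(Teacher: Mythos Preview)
Your proposal is correct and follows the paper's own route: the paper's entire proof of Theorem~\ref{thrm:MainOver} is the one-line remark ``consider the matrix $\A^*$ which just switches the role of $\bx$ and $\bomega$ in the theorems,'' and your argument is a careful unpacking of exactly this symmetry (using $\A^T$ rather than $\A^*$, whence the conjugation bookkeeping). The parameter substitution $\gamma'=\sigma\sqrt{d}$, $\sigma'=\gamma/\sqrt{d}$ preserving the product $\gamma^2\sigma^2$ is precisely the content of the swap, and your observation $\|\overline{\boldsymbol{M}}\|_2=\|\boldsymbol{M}\|_2$ makes rigorous what the paper leaves implicit.
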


A direct consequence of Theorem \ref{thrm:MainUnder} is that all of the eigenvalues of $m^{-1}\A^*\A$ are close to $1$. Specifically, if the conditions in Theorem \ref{thrm:MainUnder} are satisfied, then
\begin{align*}
    \left|\lambda_k\left( \frac{1}{m}\A^*\A \right)-1\right|\leq 2\eta,
\end{align*}
with probability at least $1-3\delta$. Here $\lambda_k(\boldsymbol{B})$ is the $k$-th eigenvalue of the matrix $\boldsymbol{B}$. Similar results also hold for $N^{-1}\A\A^*$ if the conditions in Theorem \ref{thrm:MainOver} are satisfied. This provides an upper bound for the largest eigenvalue and a lower bound for the smallest eigenvalue. Therefore, we can conclude that the matrix $\A$ has small condition number when the complexity conditions in the theorems are satisfied. 

%%%%%%%%%%%%%%%%%%%%%%%%%%%%%%%%%%%%%%%%%%%%%%%%%%%%%%%%%%%
\subsection{Concentration to the Gaussian Kernel}
Theorem~\ref{thrm:MainOver} is actually a consequence of the more general results which only requires that the data is well-separated, i.e. there is a minimum distance between data samples. 
\begin{theorem}[Concentration to the Kernel]\label{Thm: Concentration to kernel}
Suppose that the random feature matrix $\A$ is defined (component-wise) by $\A_{j,k} = \exp(i\langle \bx_j,\bomega_k \rangle)$, the feature weights $\{\bomega_k\}_{k\in[N]}\subset\mathbb{R}^d$ are sampled from $\mathcal{N}(\boldsymbol{0},\sigma^2\boldsymbol{I}_d)$, and that for the data points $\{\bx_j\}_{j\in[m]}\subset\mathbb{R}^d$ there is a constant $R>0$ such that $\|\bx_j-\bx_k\|_2^2\geq R$ for all $j,k\in[m]$ with $j\neq k$. If the following conditions hold
\begin{align}
    N\geq C\eta^{-2}m\log\left(\frac{2m}{\delta}\right) \label{eq: condition of number of samples thrm3} \\
    \sigma^2 \geq \frac{2}{R}\log\left(\frac{m}{\eta}\right),\label{eq: condition of data variance thrm3}
\end{align}
for some $\delta,\eta\in(0,1)$, where $C>0$ is a universal constant. Then with probability at least $1-\delta$ we have
\begin{align*}
    \left\|\frac{1}{N}\bA\bA^* - \mathbb{E}_{\bomega}\left[\frac{1}{N}\bA\bA^*\right]\right\|_2 \leq  \eta.
\end{align*}
\end{theorem}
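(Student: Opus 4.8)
The plan is to realize $\frac1N\bA\bA^*$ as an empirical average of $N$ i.i.d.\ rank-one Hermitian matrices and apply a matrix Bernstein inequality, using the well-separation hypothesis only to obtain a \emph{dimension-free} bound on the spectral norm of the limiting kernel matrix; that bound is exactly what produces the linear-in-$m$ (rather than quadratic) sample complexity in \eqref{eq: condition of number of samples thrm3}. Write $\bA\bA^* = \sum_{k=1}^N \boldsymbol{z}_k\boldsymbol{z}_k^*$, where $\boldsymbol{z}_k\in\mathbb{C}^m$ has entries $(\boldsymbol{z}_k)_j = \exp(i\langle\bx_j,\bomega_k\rangle)$; since the $\bomega_k$ are i.i.d., the summands $\boldsymbol{z}_k\boldsymbol{z}_k^*$ are i.i.d.\ too, and $\|\boldsymbol{z}_k\boldsymbol{z}_k^*\|_2 = \|\boldsymbol{z}_k\|_2^2 = m$ deterministically because every entry of $\boldsymbol{z}_k$ has modulus one. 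The Gaussian characteristic function gives $\mathbb{E}_{\bomega}\big[\boldsymbol{z}_k\boldsymbol{z}_k^*\big]_{j,l} = \mathbb{E}_{\bomega}\exp(i\langle\bx_j-\bx_l,\bomega_k\rangle) = \exp(-\tfrac{\sigma^2}{2}\|\bx_j-\bx_l\|_2^2)$, so $\mathbb{E}_{\bomega}\big[\tfrac1N\bA\bA^*\big] = \boldsymbol{K}$ is the Gaussian kernel matrix with entries $\boldsymbol{K}_{j,l} = \exp(-\tfrac{\sigma^2}{2}\|\bx_j-\bx_l\|_2^2)$.

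The second step is to control $\|\boldsymbol{K}\|_2$. Using the separation assumption $\|\bx_j-\bx_l\|_2\geq R$ for $j\neq l$ together with the variance condition \eqref{eq: condition of data variance thrm3}, every off-diagonal entry of $\boldsymbol{K}$ is at most $\eta/m$; since $\boldsymbol{K}$ has unit diagonal, Gershgorin's theorem (equivalently, bounding the induced $1$-norm of $\boldsymbol{K}-\boldsymbol{I}_m$ by its maximal off-diagonal row sum) gives $\|\boldsymbol{K}-\boldsymbol{I}_m\|_2\leq\eta$, hence $\boldsymbol{K}\succeq 0$ and $\|\boldsymbol{K}\|_2\leq 1+\eta\leq 2$. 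This $O(1)$ estimate is the essential consequence of well-separation; absent it, one only has the trivial $\|\boldsymbol{K}\|_2\leq m$.

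Now set $X_k := \boldsymbol{z}_k\boldsymbol{z}_k^* - \boldsymbol{K}$, so that the $X_k$ are independent Hermitian matrices with $\mathbb{E}_{\bomega}X_k = 0$ and $\|X_k\|_2\leq m + \|\boldsymbol{K}\|_2 \leq 2m =: L$. Expanding $X_k^2 = \|\boldsymbol{z}_k\|_2^2\,\boldsymbol{z}_k\boldsymbol{z}_k^* - \boldsymbol{z}_k\boldsymbol{z}_k^*\boldsymbol{K} - \boldsymbol{K}\boldsymbol{z}_k\boldsymbol{z}_k^* + \boldsymbol{K}^2$ and taking expectations yields $\mathbb{E}_{\bomega}X_k^2 = m\boldsymbol{K} - \boldsymbol{K}^2 = \boldsymbol{K}(m\boldsymbol{I}_m - \boldsymbol{K})$, whose eigenvalues are $\lambda(m-\lambda)$ with $\lambda\in[0,2]$ an eigenvalue of $\boldsymbol{K}$, hence at most $2m$; therefore the matrix variance parameter is $v := \big\|\sum_{k=1}^N\mathbb{E}_{\bomega}X_k^2\big\|_2 \leq 2Nm$. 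Applying the matrix Bernstein inequality to $\sum_{k=1}^N X_k = \bA\bA^* - N\boldsymbol{K}$ with $t = N\eta$ gives
\begin{align*}
\mathbb{P}\!\left(\left\|\tfrac1N\bA\bA^* - \boldsymbol{K}\right\|_2 \geq \eta\right) \;\leq\; 2m\exp\!\left(\frac{-N^2\eta^2/2}{\,v + Lt/3\,}\right) \;\leq\; 2m\exp\!\left(-\frac{N\eta^2}{6m}\right),
\end{align*}
where the last inequality uses $\eta<1$. By \eqref{eq: condition of number of samples thrm3} with the universal constant $C$ taken at least $6$, the right-hand side is at most $\delta$, which proves the theorem; the same argument, combined with the fact that i.i.d.\ subgaussian data are well-separated with high probability, is what will give Theorem~\ref{thrm:MainOver}.

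I expect the main obstacle to be the second step, the dimension-free estimate $\|\boldsymbol{K}\|_2 = O(1)$, since this is precisely what separates the sharp complexity $N \gtrsim \eta^{-2}m\log(m/\delta)$ from the weaker $N\gtrsim\eta^{-2}m^2\log(m/\delta)$ that the trivial norm bound would force. Everything else is a routine instantiation of matrix Bernstein; the only points needing care are verifying that $\bA\bA^*$ is a sum of genuinely \emph{independent} summands (it is, since each $\boldsymbol{z}_k\boldsymbol{z}_k^*$ depends only on its own weight vector $\bomega_k$) and that the bounds on $L$ and $v$ hold uniformly over all admissible data configurations.
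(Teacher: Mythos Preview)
Your proposal is correct and follows essentially the same approach as the paper: the paper's proof of Theorem~\ref{Thm: Concentration to kernel} is obtained by transposing the argument for Theorem~\ref{Thm: Concentration with Separated Weights}, which likewise writes the Gram matrix as a sum of i.i.d.\ rank-one matrices $\boldsymbol{z}_k\boldsymbol{z}_k^*$, uses Gershgorin together with the separation hypothesis to bound $\|\mathbb{E}_{\bomega}[\boldsymbol{z}_k\boldsymbol{z}_k^*]\|_2\leq 1+\eta$, and then applies the matrix Bernstein inequality (Lemma~\ref{Matrix Bernstein}) with the same variance identity $\mathbb{E}_{\bomega}[X_k^2]=m\boldsymbol{K}-\boldsymbol{K}^2$. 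The only cosmetic differences are that the paper bounds $\|X_k\|_2$ by applying Gershgorin directly to $X_k$ (yielding $m+\eta$ rather than your $2m$) and bounds the variance via the triangle inequality rather than your eigenvalue computation; neither affects the final complexity.
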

The term $\mathbb{E}_{\bomega}\left[\frac{1}{N}\bA\bA^*\right]$ is in fact the associated kernel matrix for this problem, i.e. the Gaussian kernel matrix.
Although conditions \eqref{eq: condition of number of samples thrm3} and \eqref{eq: condition of data variance thrm3} do not directly use the dimension of the data, the results implicitly improve in high-dimensions where the distance between data points can be larger.

%%%%%%%%%%%%%%%%%%%%%%%%%%%%%%%%%%%%%%%%%%%%%%%%%%%%%%%%
\subsection{Comparison with Similar Results}
There are several important differences between Theorems \ref{thrm:MainUnder} and \ref{thrm:MainOver} as compared to Theorem 3.1(a-b) from \cite{chen2021conditioning}. For this discussion, we will ignore the universal constants since they may not be optimal in the theorems discussed (and come from difference sampling distributions) and instead focus on the parameter dependencies. The first difference is the dependence between the failure rate $\delta$, the concentration parameter $\eta$, and the number of random features $N$. Theorems \ref{thrm:MainUnder} and \ref{thrm:MainOver} uncovers a refined relationship between the parameters when the dimension is sufficiently large, in particular, $N$ is controlled by
 $$\min\left(\delta \exp\left(d\right), \eta \exp\left(\gamma^2\sigma^2 \right) \right)$$
 i.e. larger dimensions lead to a smaller failure rate,
while in \cite{chen2021conditioning} (when $d$ is sufficiently large) $N$ is controlled by
$$\sqrt{\delta} \eta \exp\left(\gamma^2\sigma^2 \right)$$
which implies that the product of the variances must be used to compensate for the failure rate and concentration parameter. We also extend the results to include subgaussian sampling or weights, which only change the universal constants in our bounds. In addition, we show that the concentration relies on the separation between points and does not require both variables to be random (see Theorem \ref{Thm: Concentration to kernel}).

While the theorems and proofs in Section \ref{sec:theory} use similar concentration techniques to those found in the compressive sensing (CS) literature, they differ in several key places. Both consider rectangular nonlinear random matrices; however, in this work the two inputs for the matrix can both be random variables.  The bounded orthonormal systems (BOS) matrices found in compressed sensing assumes that basis parameters ($\bomega$ for example) are fixed \cite{foucart13}. Some standard examples of BOS include the Fourier basis on $[0,1]$ where ${\bx}$ is sampled from the uniform distribution on $[0,1]$, the tensorized Legendre polynomial basis where ${\bx}$ is sampled from the uniform distribution on $[-1,1]^d$, or the multivariate Hermite polynomials where ${\bx}$ is sampled from the multivariate Gaussian distribution in $d$-dimensions \cite{rauhut2016interpolation}. In each of these cases, the basis is generated so that it is orthogonal with respect to the sampling density for ${\bx}$ which is not the case for random feature matrices. In addition, our results hold for unbounded data (e.g. Gaussian), which has been a long-standing question for the analysis of the restricted isometry property for BOS matrices \cite{gilbert2019sparse}. This is an example where the random feature matrices are more robust to the data sampling than orthogonality-based methods. 

%%%%%%%%%%%%%%%%%%%%%%%%%%%%%%%%%%%%%%%%%%%%%%%%%%%%%%%%%%%%%%%%%%

%%%%%%%%%%%%%%%%%%%%%%%%%%%%%%%%%%%%%%%%%%%%%%%%%%%%%%%%%%%%%%%%%%

%%%%%%%%%%%%%%%%%%%%%%%%%%%%%%%%%%%%%%%%%%%%%%%%%%%%%%%%%%%%%%%%%%

\section{Numerical Experiments}

In this section, we verify some of the results numerically and show that the condition on the dimension, i.e. $d\geq C_1 \log\left(\frac{N}{\delta}\right)$, may have a favorable constant in practice. In Figure~\ref{figure:VsDim}, we display the concentration of the singular values as a function of the dimension by plotting the maximum and minimum singular values. The curves envelop the range of the singular values. The random feature matrix is constructed using the complex exponential function with $m=100$ random samples drawn from the normal distribution with $\gamma=1$ and $N=5000$ random weights. The matrices are normalized so that each column has unit $\ell^2$ norm. We chose the scaling between $m$ and $N$ so that Equation \eqref{eq:mlogN} would hold. The weights are drawn from the normal distribution with the standard deviation $\sigma$ specified in Figure~\ref{figure:VsDim}. For each dimension, we used 10 trials to calculate the mean of the extreme singular values (the solid curves) and one standard deviation (the shaded regions). The plots indicate that as $\sigma$ increases or as the dimension $d$ increases, the singular values concentrate quickly. In this case as $\sigma$ increases, the potential range decreases up to the range indicated by $\sigma=3$, i.e. all pairs of curves for $\sigma\geq3$ will have the same upper and the same lower plateaus as the curves generated with $\sigma=3$. Additionally, for $\sigma=3$, once the dimension exceeds $3$ the matrix is well-conditioned. 

\begin{figure}[t!]
\begin{center}
\includegraphics[trim={0.1cm 0.1cm 0.1cm 0.1cm}, width=4in, clip]{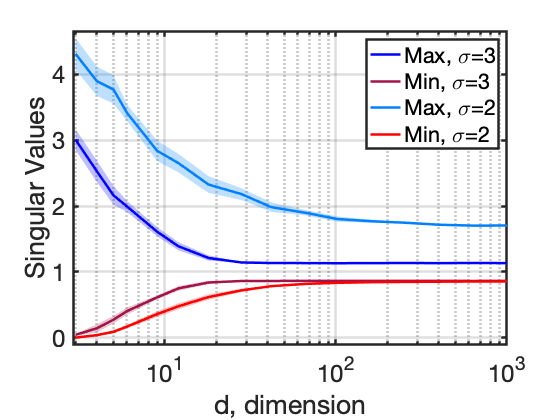}
\caption{\textbf{Extreme Singular Values versus dimension}: The plot displays the maximum and minimum singular values for various dimensions $d$. For each dimension, 10 trials are used to calculate the mean value (the solid curves) and one standard deviation (shaded regions). The random feature matrix is the complex exponential with $m=100$, $N=5000$, and $\gamma=1$. The standard deviation of the weights $\sigma$ are specified for each curve. Even with a moderate dimension, i.e. $d\geq10$, the condition number is already less than $15$ for both examples.  } \label{figure:VsDim}
\end{center}
\end{figure}

\begin{figure}[h!]
\begin{center}
\includegraphics[trim={0.1cm 0.1cm 0.1cm 0.1cm}, width=2.75in, clip]{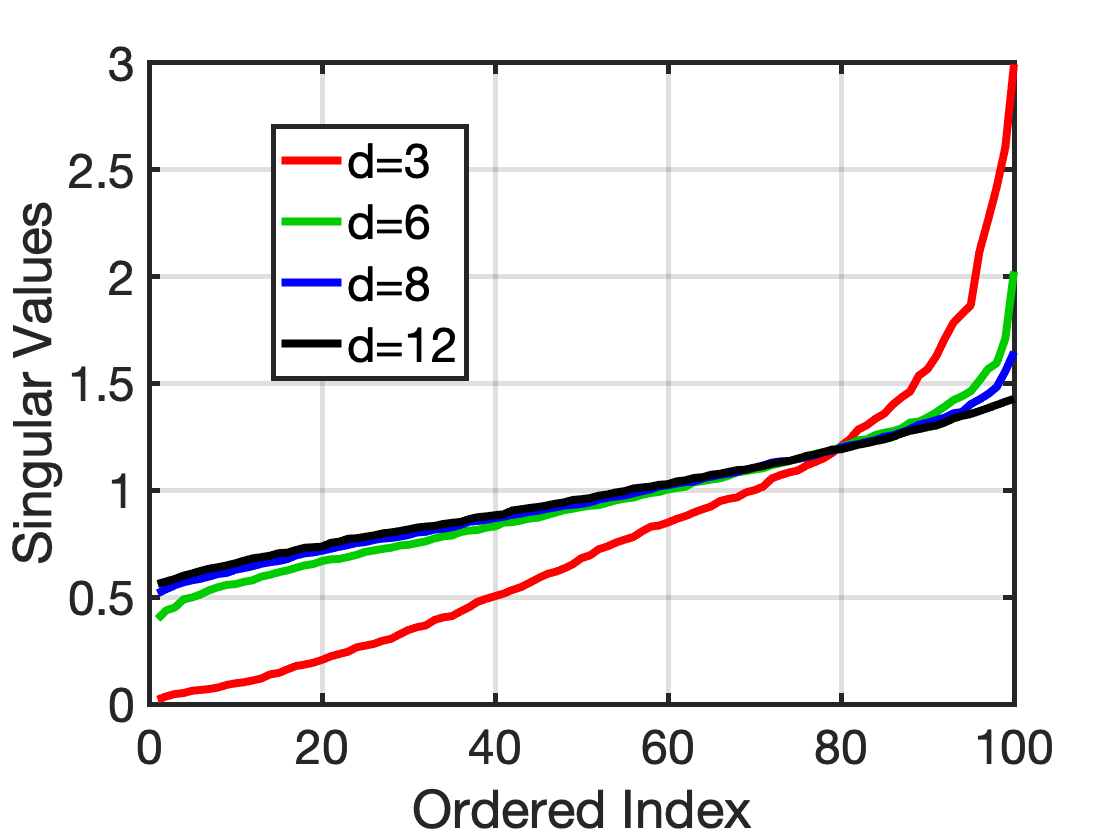}
\includegraphics[trim={0.1cm 0.1cm 0.1cm 0.1cm}, width=2.75in, clip]{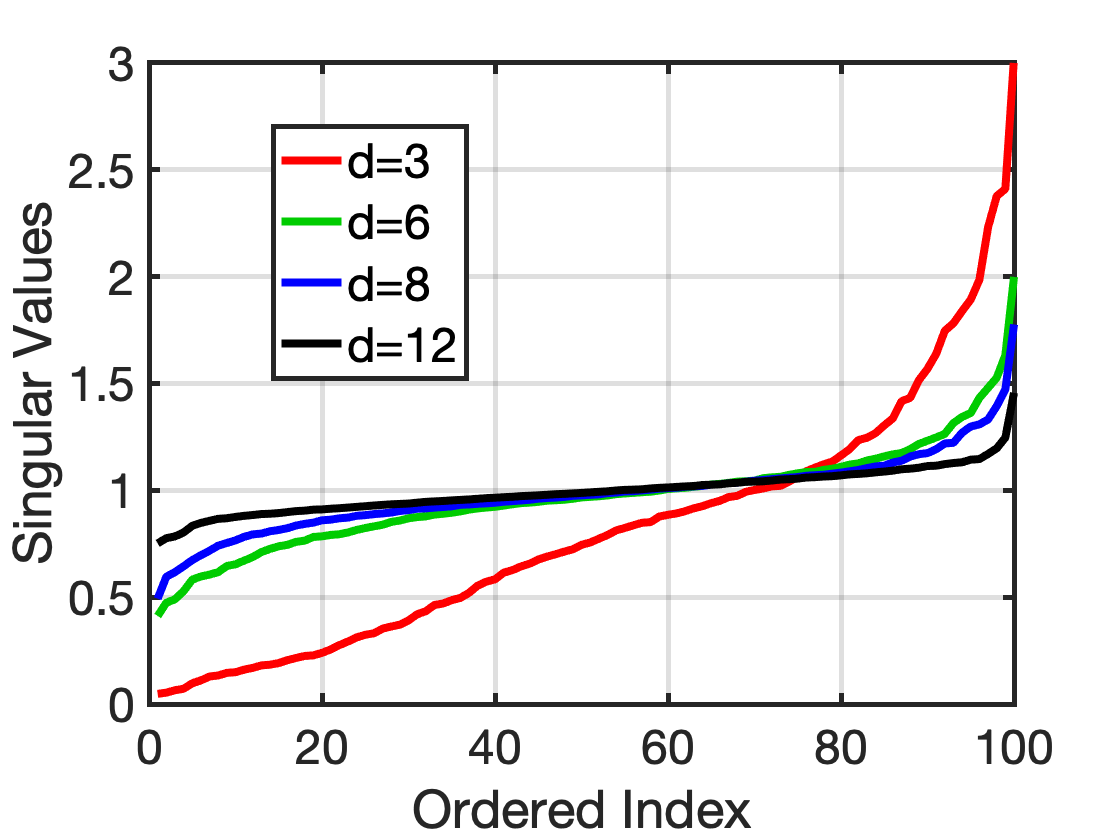}
\caption{\textbf{Singular Value Distribution with Different $d$ and $N$}: The figure shows the distribution of the singular values in ascending as a function of the dimension $d$ and the number of random features $N$.  The random feature matrix is the complex exponential with $m=100$, $\gamma=1$, and $\sigma=3$. The plot on the left uses $N=500$ and the plot on the right uses $N=5000$. This experiment shows that the singular values concentration around $1$ quickly in dimension and $N$.} \label{fig:svd_dist1}
\end{center}
\end{figure}

\begin{figure}[h!]
\begin{center}
\includegraphics[trim={0.1cm 0.1cm 0.1cm 0.1cm}, width=2.75in, clip]{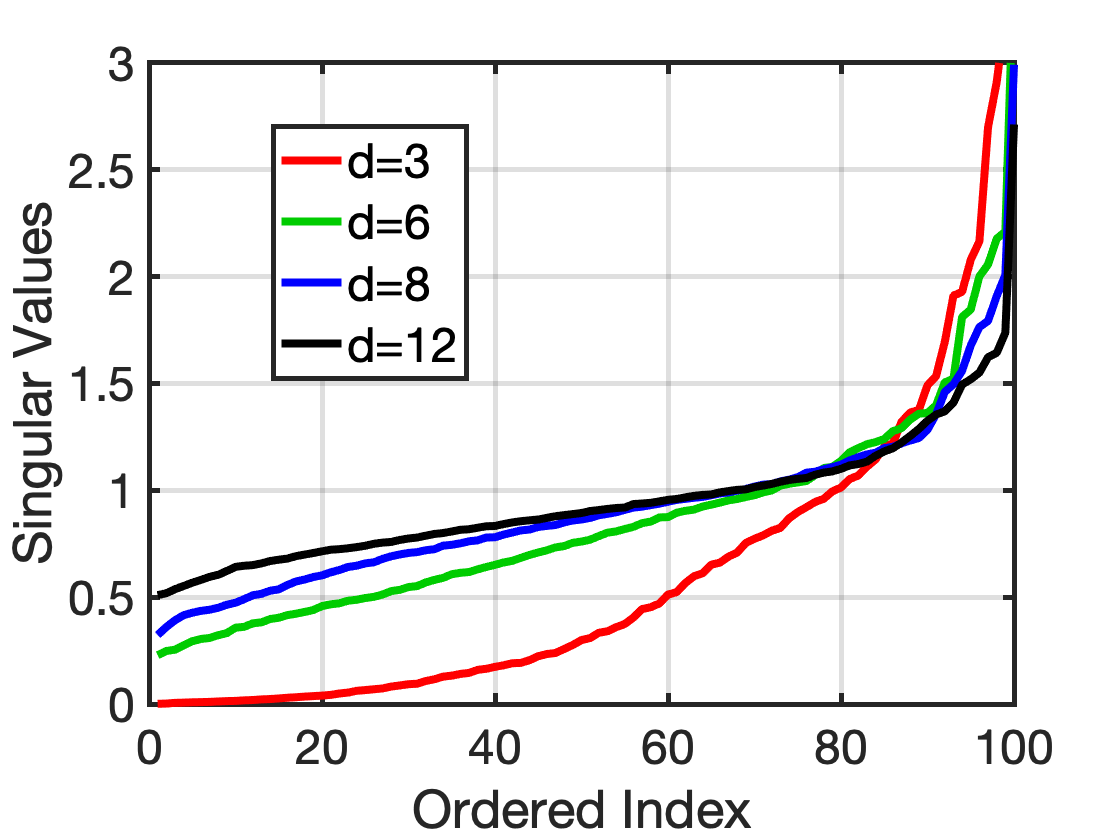}
\includegraphics[trim={0.1cm 0.1cm 0.1cm 0.1cm}, width=2.75in, clip]{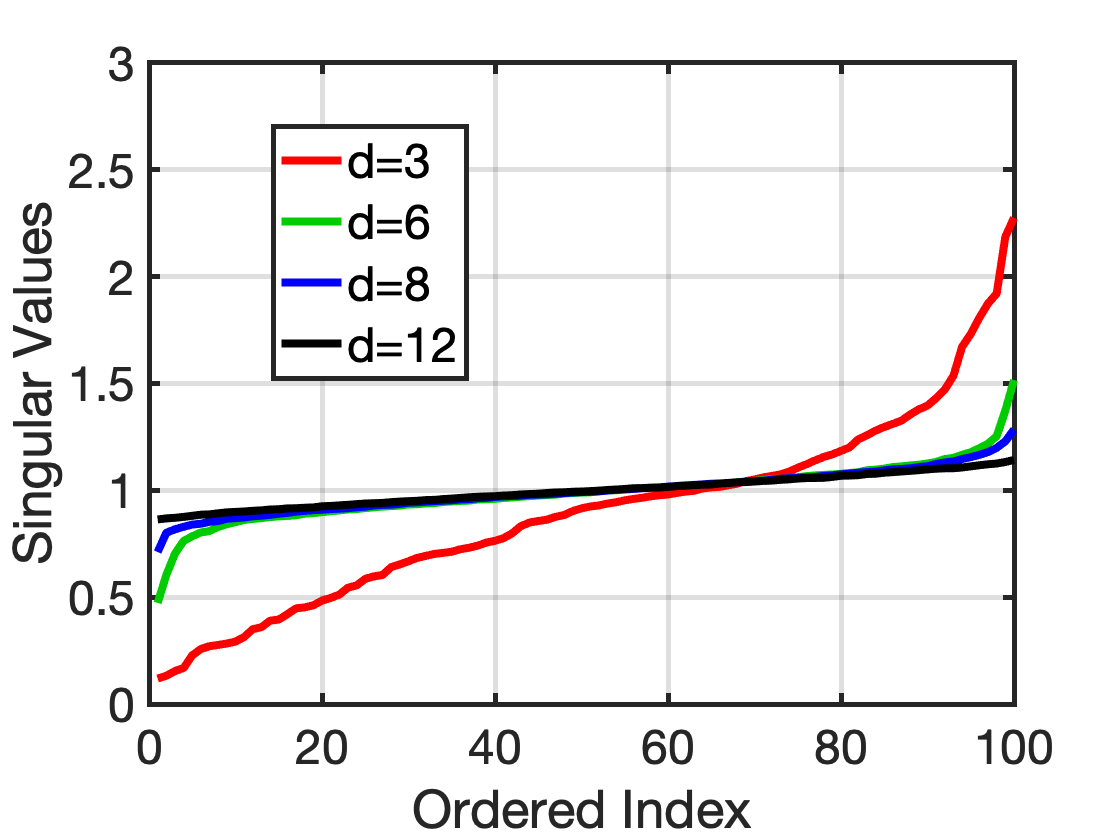}
\caption{\textbf{Singular Value Distribution with Different $d$ and $\sigma$}: The figure shows the distribution of the singular values in ascending as a function of the dimension $d$ and the standard deviation of the random weights $\sigma$.  The random feature matrix is the complex exponential with $m=100$, $N=5000$, and $\gamma=1$. The plot on the left uses $\sigma=2$ and the plot on the right uses $\sigma=4$. The plots indicate that a small increase in $\sigma$ has a dramatic effect on the concentration once $d$ is above some value.} \label{fig:svd_dist2}
\end{center}
\end{figure}

We study the distribution of the singular value for different parameters in Figure \ref{fig:svd_dist1} and Figure \ref{fig:svd_dist2}. The theory shows that the distribution should be close to the line $y=1$. In Figure \ref{fig:svd_dist1}, the dimension $d$ and the number of random features $N$ are varied and the distribution of the singular values are plotted in ascending order (thus the $x$-axis is the sorted index).  The random feature matrix is the complex exponential (with normalized columns) with $m=100$, $\gamma=1$, and $\sigma=3$. The left plot in Figure \ref{fig:svd_dist1} has $N=500$ random features and the right plot in Figure \ref{fig:svd_dist1} has $N=5000$. In both plots, when the dimension is $3$, the minimum singular values are close to zero, thus leading to ill-conditioning of $\bA$. For $d\geq 6$ the singular values range from about $0.5$ to $2$ with noticeable concentration for $d=12$. Note that the concentration is more significant for larger $N$ while retaining a similar dependency on $d$. In Figure \ref{fig:svd_dist2}, we consider the case where the dimension $d$ and the standard deviation of the random features $\sigma$ are varied. The random feature matrix uses the same setup as  Figure \ref{fig:svd_dist1} except that $N=5000$ while the standard deviation $\sigma$ is varied. The left plot in the Figure \ref{fig:svd_dist2} uses $\sigma=2$ and the right plot in Figure \ref{fig:svd_dist2} uses $\sigma=4$, showing that larger values of $\sigma$ and $d$ lead to better concentration. Note that there is an asymmetry in the distributions of the singular values around 1 as seen in Figure \ref{fig:svd_dist1} and Figure \ref{fig:svd_dist2}.

%%%%%%%%%%%%%%%%%%%%%%%%%%%%%%%%%%%%%%%%%%%%%%%%%%%%%%%%%%%%%%%%%%

%%%%%%%%%%%%%%%%%%%%%%%%%%%%%%%%%%%%%%%%%%%%%%%%%%%%%%%%%%%%%%%%%%

%%%%%%%%%%%%%%%%%%%%%%%%%%%%%%%%%%%%%%%%%%%%%%%%%%%%%%%%%%%%%%%%%%

%%%%%%%%%%%%%%%%%%%%%%%%%%%%%%%%%%%%%%%%%%%%%%%%%%%%%%%%%%%%%%%%%%
\section{Theoretical Analysis}\label{sec:theory}

In this section, we present the key results for bounding the differences in \eqref{eq: Gram-I (Under)} and \eqref{eq: Gram-Expectation (Under)}. These will then lead to the proof of Theorems \ref{thrm:MainUnder}, \ref{thrm:MainOver}, and \ref{Thm: Concentration to kernel}. The main arguments are split into multiple theorems with shorter proofs for clarity and modularity. In all of our results we assume that the random feature matrix $\A$ is defined as $\A_{j,k} = \exp(i\langle \bx_j,\bomega_k \rangle)$.

%%%%%%%%%%%%%

\subsection{Concentration with Separated Weights}

Using the matrix Bernstein inequality (Lemma \ref{Matrix Bernstein} in the Appendix), we show that the Gram matrix is close to its expectation for feature weights that are well-separated.
\begin{theorem}[Concentration with Separated Weights]\label{Thm: Concentration with Separated Weights}
Let $\{\bx_j\}_{j\in [m]}\subset\mathbb{R}^d$ be the data sampled from $\mathcal{N}(\boldsymbol{0},\frac{\gamma^2}{d}\bI_d)$. Suppose that $\{\bomega_k\}_{k\in[N]}\subset \mathbb{R}^d$ is a set of feature weights, and there is a constant $R>0$ such that $\|\bomega_j-\bomega_k\|_2^2\geq Rd$ for all $j,k\in[N]$ with $j\neq k$. If the following conditions hold
\begin{align}
    m\geq C\eta^{-2}N\log\left(\frac{2N}{\delta}\right) \label{eq: condition of number of samples} \\
    \gamma^2 \geq \frac{2}{R}\log\left(\frac{N}{\eta}\right)\label{eq: condition of data variance}
\end{align}
for some $\delta,\eta\in(0,1)$, where $C>0$ is a universal constant. Then with probability at least $1-\delta$ we have
\begin{align*}
    \left\|\frac{1}{m}\bA^*\bA - \mathbb{E}_{\bx}\left[\frac{1}{m}\bA^*\bA\right]\right\|_2 \leq  \eta.
\end{align*}
\end{theorem}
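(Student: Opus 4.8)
The plan is to write $\frac{1}{m}\bA^*\bA = \frac{1}{m}\sum_{j=1}^m \bZ_j$ where $\bZ_j \in \mathbb{C}^{N\times N}$ is the rank-one matrix with entries $(\bZ_j)_{k,\ell} = \exp(i\langle \bx_j, \bomega_k - \bomega_\ell\rangle)$, i.e. $\bZ_j = \bv_j \bv_j^*$ with $(\bv_j)_k = \exp(i\langle \bx_j,\bomega_k\rangle)$. Since the $\bx_j$ are i.i.d., the $\bZ_j$ are i.i.d.\ copies of a single random matrix $\bZ$, and $\mathbb{E}_{\bx}[\frac{1}{m}\bA^*\bA] = \mathbb{E}[\bZ]$. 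So the statement is precisely a matrix Bernstein bound for the empirical average $\frac{1}{m}\sum_j \bZ_j$ around its mean, and the task reduces to (i) bounding $\|\bZ - \mathbb{E}\bZ\|_2$ almost surely and (ii) bounding the matrix variance $\|\mathbb{E}(\bZ-\mathbb{E}\bZ)^2\|_2$, then plugging into Lemma~\ref{Matrix Bernstein}.

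For the almost-sure bound: $\|\bZ\|_2 = \|\bv_j\|_2^2 = \sum_{k=1}^N |\exp(i\langle\bx_j,\bomega_k\rangle)|^2 = N$ deterministically (each entry has modulus one), so $\|\bZ\|_2 = N$ and $\|\mathbb{E}\bZ\|_2 \le N$, giving $\|\bZ - \mathbb{E}\bZ\|_2 \le 2N =: B$. For the variance: $(\bZ-\mathbb{E}\bZ)^2$ has operator norm controlled by $\|\bZ-\mathbb{E}\bZ\|_2 \cdot \|\mathbb{E}[\bZ-\mathbb{E}\bZ]\cdot\text{(stuff)}\|$; more carefully, I would use $\mathbb{E}\bZ^2 = \mathbb{E}[\|\bv\|_2^2 \,\bv\bv^*] = N\,\mathbb{E}\bZ$ (since $\bZ^2 = \|\bv\|_2^2 \bv\bv^* = N\bZ$), so $\mathbb{E}(\bZ - \mathbb{E}\bZ)^2 = \mathbb{E}\bZ^2 - (\mathbb{E}\bZ)^2 = N\,\mathbb{E}\bZ - (\mathbb{E}\bZ)^2$, and hence $\|\mathbb{E}(\bZ-\mathbb{E}\bZ)^2\|_2 \le N\|\mathbb{E}\bZ\|_2 + \|\mathbb{E}\bZ\|_2^2$. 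This is where the separation hypothesis on the weights enters: I need $\|\mathbb{E}\bZ\|_2$ to be close to $1$ (it equals $1$ only if the columns of $\mathbb{E}\bv$... actually $\mathbb{E}\bZ$ has diagonal entries $1$ and off-diagonal entries $\mathbb{E}_{\bx}\exp(i\langle\bx,\bomega_k-\bomega_\ell\rangle) = \exp(-\frac{\gamma^2}{2d}\|\bomega_k-\bomega_\ell\|_2^2)$ for Gaussian $\bx$). Using $\|\bomega_k-\bomega_\ell\|_2 \ge Rd$, each off-diagonal entry is at most $\exp(-\frac{\gamma^2 R^2 d}{2})$ in modulus, and by Gershgorin (or a crude $\|\cdot\|_{1\to1}$ bound) $\|\mathbb{E}\bZ - \bI_N\|_2 \le (N-1)\exp(-\frac{\gamma^2 R^2 d}{2})$. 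Condition~\eqref{eq: condition of data variance}, $\gamma^2 \ge \frac{2}{R}\log(N/\eta)$, combined with $R^2 d \ge R$ when... hmm — actually I should double-check the exponent: $\frac{\gamma^2}{2d}(Rd)^2 = \frac{\gamma^2 R^2 d}{2}$, and the condition gives $\gamma^2 R \ge 2\log(N/\eta)$, so $\frac{\gamma^2 R^2 d}{2} \ge R d\log(N/\eta) \ge \log(N/\eta)$ (for $Rd\ge1$), making the off-diagonal sum $\le (N-1)\cdot\frac{\eta}{N} \le \eta < 1$; thus $\|\mathbb{E}\bZ\|_2 \le 1 + \eta \le 2$. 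Therefore the variance parameter satisfies $\sigma_{\mathrm{var}}^2 := \|\sum_j \mathbb{E}(\bZ_j - \mathbb{E}\bZ)^2\|_2 = m\|\mathbb{E}(\bZ-\mathbb{E}\bZ)^2\|_2 \le m(2N + 4) \lesssim mN$.

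Now apply matrix Bernstein to $\frac{1}{m}\sum_j(\bZ_j - \mathbb{E}\bZ)$, equivalently to $\bS := \sum_j (\bZ_j - \mathbb{E}\bZ)$ with the target deviation $m\eta$. The bound gives
\[
\mathbb{P}\!\left(\|\bS\|_2 \ge m\eta\right) \le 2N \exp\!\left(\frac{-(m\eta)^2/2}{\sigma_{\mathrm{var}}^2 + B\,m\eta/3}\right) \le 2N\exp\!\left(\frac{-m^2\eta^2/2}{C'mN + C''Nm\eta}\right) \le 2N\exp\!\left(-c\,\frac{m\eta^2}{N}\right),
\]
using $\eta<1$ to absorb the two denominator terms into $O(mN)$. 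Requiring the right-hand side to be at most $\delta$ yields $m \ge C\eta^{-2} N\log(2N/\delta)$, which is exactly condition~\eqref{eq: condition of number of samples}. This completes the argument. I expect the main technical care to be in (a) correctly identifying $\mathbb{E}\bZ$ via the Gaussian characteristic function and pinning down the constant in the exponent from the separation condition, and (b) making sure the rank-one structure is exploited so that $B = O(N)$ and $\sigma_{\mathrm{var}}^2 = O(mN)$ rather than worse — it is precisely this $O(N)$ scaling (as opposed to $O(N^2)$ from a naive Frobenius-type bound) that delivers the favorable $m\gtrsim N\log N$ sample complexity rather than $m\gtrsim N^2$.
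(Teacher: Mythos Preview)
Your approach is essentially identical to the paper's: the same rank-one decomposition $\bA^*\bA=\sum_j \bv_j\bv_j^*$, the same identity $\bZ^2=N\bZ$ to compute the variance $N\,\mathbb{E}\bZ-(\mathbb{E}\bZ)^2$, the same Gershgorin bound on $\|\mathbb{E}\bZ\|_2$ via the Gaussian characteristic function, and the same matrix Bernstein conclusion (the paper gets $K=N+\eta$ via Gershgorin on $\bZ-\mathbb{E}\bZ$ directly, versus your $2N$ from the triangle inequality, but this is immaterial). Your hesitation about the exponent is well-founded but stems from a typo in the statement: the separation hypothesis should read $\|\bomega_j-\bomega_k\|_2^2\geq Rd$ (norm \emph{squared}), as is clear from the paper's own computation $e^{-\frac{\gamma^2}{2d}\|\bomega_k-\bomega_j\|_2^2}\le e^{-\frac{\gamma^2}{2}R}$ and from how the theorem is later invoked with $R=\sigma^2/2$ coming from $\|\bomega_j-\bomega_k\|_2^2\geq \sigma^2 d/2$; with that reading the exponent is $\frac{\gamma^2 R}{2}$ and condition~\eqref{eq: condition of data variance} yields $N e^{-\gamma^2 R/2}\le\eta$ without your extra assumption $Rd\ge 1$.
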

\begin{proof}
Let $\bX_\ell$ be $\ell$-th column of $\bA^*$. Defined the random matrices $\{\boldsymbol{Y}_\ell\}_{\ell\in[m]}$ as 
\begin{align*}
    \boldsymbol{Y}_{\ell} = \bX_\ell \bX_\ell^* - \mathbb{E}_{\bx}[\bX_\ell \bX_\ell^*].
\end{align*}
Then $(\boldsymbol{Y}_\ell)_{j,j}=0$ and $(\boldsymbol{Y}_\ell)_{j,k} = \exp(i\langle \bx_\ell, \bomega_k-\bomega_j\rangle) - \exp(-\gamma^2\|\bomega_k-\bomega_j\|_2^2/(2d))$ for $j,k\in[N]$ with $j\neq k$. Note that $\boldsymbol{Y}_\ell$ is self-adjoint and its induced $\ell^2$ norm is bounded by its largest eigenvalue. By Gershgorin's disk theorem and condition \eqref{eq: condition of data variance},
\begin{align*}
    \|\boldsymbol{Y}_\ell\|_2 \leq \max_{j\in[N]} \sum_{k\neq j} \left|e^{i\langle \bx_\ell, \bomega_k-\bomega_j \rangle} - e^{-\frac{\gamma^2}{2d}\|\bomega_k-\bomega_j\|_2}\right| \leq N\left(1+e^{-\frac{\gamma^2}{2}R}\right)\leq N+\eta.
\end{align*}
The variance parameter in Lemma \ref{Matrix Bernstein} is bounded by
\begin{align*}
    \left\| \sum_{\ell=1}^m \mathbb{E}_{\bx}[\boldsymbol{Y}_\ell^2] \right\|_2 \leq \sum_{\ell=1}^m \|\mathbb{E}_{\bx}[\boldsymbol{Y}_\ell^2]\|_2 
    &=\sum_{\ell=1}^m \|N \mathbb{E}_{\bx}[\bX_\ell\bX_\ell^*] - (\mathbb{E}_{\bx}[\bX_\ell\bX_\ell^*])^2\|_2 \\
    &\leq m [N(1+\eta)+(1+\eta)^2].
\end{align*}
Here we use the fact that $\bX_\ell$ is a vector with $\|\bX_\ell\|_2=\sqrt{N}$, which implies $\bX_\ell \bX_\ell^* \bX_\ell \bX_\ell^* = N \bX_\ell \bX_\ell^*$, and $\mathbb{E}_{\bx}[\bX_\ell\bX_\ell^*]$ is self-adjoint whose $\ell^2$ norm is bounded by
\begin{align*}
    \|\mathbb{E}_{\bx}[\bX_\ell \bX_\ell^*]\|_2 \leq 1+ \max_{j\in[N]} \sum_{k\neq j}\left|e^{-\frac{\gamma^2}{2d}\|\bomega_k-\bomega_j\|_2^2}\right|\leq 1+N\exp\left(-\frac{\gamma^2}{2}R\right)\leq 1+\eta,
\end{align*}
by Gershgorin's disk theorem. Since $\{\boldsymbol{Y}_\ell\}_{\ell\in[m]}$ are independent mean-zero self-adjoint matrices, applying Lemma \ref{Matrix Bernstein} with $K=N+\eta$ and $\sigma^2=m[N(1+\eta)+(1+\eta)^2]$ then gives
\begin{align*}
    \mathbb{P}\left(\left\|\frac{1}{m}\bA^*\bA-\mathbb{E}_{\bx}\left[\frac{1}{m}\bA^*\bA\right]\right\|_2\geq \eta\right) &= \mathbb{P}\left(\left\|\sum_{\ell=1}^m \boldsymbol{Y}_\ell\right\|_2\geq m\eta\right) \\
    &\leq 2N \exp\left( -\frac{m\eta^2/2}{N(1+\eta)+(1+\eta)^2+(N+\eta)\eta/3}\right) \\
    &\leq 2N\exp\left(-\frac{m\eta^2}{5N+9}\right).
\end{align*}
The left-hand term is less than $\delta$, provided condition \eqref{eq: condition of number of samples} is satisfied with $C=6$ (assuming that $N\geq 9$ and $\eta<1$). This completes the proof.
\end{proof}

%%%%%%%%%%%%%%%%%%%%%%%%%%%%%%%%
\subsection{Separation of Subgaussian Weights} \label{sec:subgaussian}
Theorem \ref{Thm: Concentration with Separated Weights} in the previous section requires that the weights $\{\bomega_k\}_{k\in[N]}$ are sufficiently separated, but does not place a restriction on the sampling process. Next, we show that if $\{\bomega_k\}_{k\in[N]}$ are sampled independently from a subgaussian distribution then they are separated with high probability. Recall that a random variable $X$ is called subgaussian if there exist $\beta,\kappa>0$ such that
\begin{align*}
    \mathbb{P}(|X|\geq t)\leq \beta e^{-\kappa t^2}\quad \text{for all }t>0.
\end{align*}
We call a random vector $\bX=(X_1,\dots,X_d)\in\mathbb{R}^d$ subgaussian if $X_i$ are mean-zero independent subgaussian with the same subgaussian parameters. Using a concentration inequality for $\ell^2$ norm of subgaussian vectors (Lemma \ref{Concentration of norm of subgaussian vector} in the Appendix), we have the following result.
\begin{theorem}[Separation of Subgaussian Weights]\label{Thm: Separation of Subgaussian Weights}
Suppose $\{\bomega_k\}_{k\in[N]}\subset\mathbb{R}^d$ is a set of random vectors such that the components of $\bomega_{k}$ are independent mean-zero subgaussian random variables with variance $1$ and the same subgaussian parameters $\beta,\kappa$. If the dimension $d$ satisfies the following condition
\begin{align*}
    d\geq Ct^{-2}\log\left(\frac{N}{\delta}\right),
\end{align*}
for $\delta, t\in(0,1)$, where $C>0$ is a constant depends on the subgaussian parameters, then
\begin{align*}
    \|\bomega_j-\bomega_k\|_2^2\geq (2-2t)d
\end{align*}
with probability at least $1-\delta$.
\end{theorem}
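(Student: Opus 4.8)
The plan is to reduce the claim to the concentration of the Euclidean norm of a \emph{single} subgaussian vector (Lemma~\ref{Concentration of norm of subgaussian vector}), applied to the coordinate-wise difference $\bomega_j-\bomega_k$, and then to take a union bound over the $\binom{N}{2}$ pairs. The key observation is that we only need a lower bound of order $d$ on $\|\bomega_j-\bomega_k\|_2^2$, and the true mean is $2d$, so we have a large margin of slack.

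First I would fix a pair $j\neq k$ and set $\bZ=\bomega_j-\bomega_k$. Its coordinates $Z_i=(\bomega_j)_i-(\bomega_k)_i$ are independent across $i$ (all coordinates of $\bomega_j,\bomega_k$ are mutually independent), mean-zero, and have variance $\mathrm{Var}(Z_i)=1+1=2$. Moreover each $Z_i$ is subgaussian in the sense of Section~\ref{sec:subgaussian}: splitting the event $\{|Z_i|\geq t\}$ into $\{|(\bomega_j)_i|\geq t/2\}\cup\{|(\bomega_k)_i|\geq t/2\}$ gives $\mathbb{P}(|Z_i|\geq t)\leq 2\beta e^{-\kappa t^2/4}$, so $Z_i$ is subgaussian with parameters $(2\beta,\kappa/4)$, which depend only on $\beta,\kappa$. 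To match the hypotheses of Lemma~\ref{Concentration of norm of subgaussian vector} exactly, I would pass to $\bW=\bZ/\sqrt{2}$, whose coordinates are independent, mean-zero, subgaussian with variance $1$ and parameters depending only on $\beta,\kappa$.

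Next I would invoke Lemma~\ref{Concentration of norm of subgaussian vector}: there is a constant $c=c(\beta,\kappa)>0$ such that
\begin{align*}
  \mathbb{P}\left(\big|\|\bW\|_2^2-d\big|\geq td\right)\leq 2\exp\left(-c t^2 d\right)
\end{align*}
for $t\in(0,1)$. On the complementary event, $\|\bomega_j-\bomega_k\|_2^2=2\|\bW\|_2^2\geq 2(1-t)d\geq (1-2t)d$, where the last inequality is intentionally loose (any $t<1$ works; the restriction $t<1/2$ is only used to keep the right-hand side positive and consistent with the downstream application). Finally, a union bound over the $\binom{N}{2}<N^2$ pairs yields
\begin{align*}
  \mathbb{P}\left(\exists\, j\neq k:\ \|\bomega_j-\bomega_k\|_2^2<(1-2t)d\right)< 2N^2\exp\left(-c t^2 d\right),
\end{align*}
which is at most $2\delta$ whenever $c t^2 d\geq 2\log(N/\delta)$, i.e. whenever $d\geq C t^{-2}\log(N/\delta)$ with $C=2/c$ depending only on the subgaussian parameters. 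This gives the statement with probability at least $1-2\delta$.

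The only real work is the second step: checking that the coordinate differences remain subgaussian with parameters controlled \emph{solely} by $\beta,\kappa$, so that the constant $c$ (hence $C$) does not secretly depend on $N$, $d$, or $t$, and being careful with the $\sqrt{2}$ rescaling so that Lemma~\ref{Concentration of norm of subgaussian vector} applies verbatim. The union bound over pairs is where both the $\log N$ term and the universal-times-subgaussian constant $C$ enter; everything else is routine bookkeeping, and (if Lemma~\ref{Concentration of norm of subgaussian vector} is instead stated for $\|\cdot\|_2$ rather than $\|\cdot\|_2^2$) a one-line conversion $\|\bW\|_2\geq(1-t)\sqrt{d}\Rightarrow\|\bW\|_2^2\geq(1-t)^2d\geq(1-2t)d$ absorbs the difference.
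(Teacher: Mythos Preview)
Your proof is correct and takes a genuinely different---and more elementary---route than the paper's.

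The paper expands $\|\bomega_j-\bomega_k\|_2^2=\|\bomega_j\|_2^2+\|\bomega_k\|_2^2-2\langle\bomega_j,\bomega_k\rangle$ and controls the two pieces separately: the norms via Lemma~\ref{Concentration of norm of subgaussian vector} (giving $\|\bomega_k\|_2^2\geq d/2$ for all $k$), and the cross terms via the restricted isometry constant $\delta_2$ of the matrix with columns $\bomega_k/\sqrt{d}$, invoking Theorem~9.2 of Foucart--Rauhut to get $\max_{j\neq k}|\langle\bomega_j,\bomega_k\rangle|\leq td$. The $t^{-2}$ in the dimension condition enters through the RIP estimate. You instead apply Lemma~\ref{Concentration of norm of subgaussian vector} \emph{directly} to the (rescaled) difference $\bW=(\bomega_j-\bomega_k)/\sqrt{2}$ and union-bound over $\binom{N}{2}$ pairs; the $t^{-2}$ arises from choosing the deviation $t\sqrt{d}$ in the norm-concentration lemma. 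Your argument avoids the RIP machinery entirely and is self-contained from Lemma~\ref{Concentration of norm of subgaussian vector}, at the modest cost of verifying that the difference of two subgaussian coordinates is again subgaussian with parameters depending only on $(\beta,\kappa)$---which you do. Amusingly, the paper itself uses exactly your ``difference is subgaussian, apply the norm lemma'' idea later, in the proof of Theorem~\ref{Thm: Concentration with Subgaussian Weights}, so your approach is fully consistent with the tools in the paper. The paper's route has the side benefit of simultaneously controlling all individual norms and all pairwise inner products, but that extra information is not used downstream, so your simpler argument loses nothing for the stated theorem.
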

\begin{proof}
We use a result for subgaussian matrices to estimate the squared distance between $\bomega_j$ and $\bomega_k$. Denote by $\boldsymbol{W}$ the matrix which has $\bomega_j/\sqrt{d}$ as its $j$-th column. The $s$-th restricted isometry property (RIP) constant $\delta_s=\delta_s(\boldsymbol{W})$ of the matrix $\boldsymbol{W}$ is the smallest $\Delta\geq 0$ such that
$$
(1-\Delta)\|\bx\|_2^2 \leq \|\boldsymbol{W}\bx\|_2^2\leq (1+\Delta) \|\bx\|_2^2
$$
holds for all $s$-sparse vector $\bx$, i.e. $\bx$ which has at most $s$ nonzero elements. Theorem 9.2 from \citep{foucart13} with $s=2$ shows that the RIP constant $\delta_2$ of $\boldsymbol{W}\in\mathbb{R}^{d\times N}$ is less than $t$ with probability at least $1-\delta$ if 
\begin{align}
    d\geq Ct^{-2}\log\left(\frac{N}{\delta}\right)\label{eq: condition for RIP},
\end{align}
for some $C>0$ which depends only on the subgaussian parameters. Note that the RIP constant $\delta_2$ can also be defined as
$$
\delta_2 := \max_{j,k\in [N],j\neq k} \|\boldsymbol{W}_{\{j,k\}}^*\boldsymbol{W}_{\{j,k\}} - \boldsymbol{I}_2\|_2,
$$
where $\boldsymbol{W}_{\{j,k\}}$ is the submatrix of $\boldsymbol{W}$ consisting of $j$-th and $k$-th column. For any $j,k\in[N], j\neq k$,  the matrix $\boldsymbol{W}_{\{j,k\}}^*\boldsymbol{W}_{\{j,k\}} - \boldsymbol{I}_2$ takes the form
\begin{align*}
    \boldsymbol{W}_{\{j,k\}}^*\boldsymbol{W}_{\{j,k\}} - \boldsymbol{I}_2 = \frac{1}{d}
    \begin{bmatrix}
     \|\bomega_j\|_2^2 -d & \langle \bomega_k,\bomega_j \rangle \\
    \langle \bomega_j, \bomega_k \rangle & \|\bomega_k\|_2^2-d
    \end{bmatrix}.
\end{align*}
This is a symmetric matrix and its eigenvalues are
\begin{align*}
    \lambda^{\pm} = \frac{(\|\bomega_j\|_2^2+\|\bomega_k\|_2^2-2d)\pm \sqrt{(\|\bomega_j\|_2^2-\|\bomega_k\|_2^2)^2+4|\langle\bomega_j,\bomega_k \rangle|^2}}{2d}.
\end{align*}
Therefore, $\delta_2\leq t$ implies that both eigenvalues are in $[-t,t]$ and consequently,
\begin{align*}
    \frac{(\|\bomega_j\|_2^2+\|\bomega_k\|_2^2-2d)- 2|\langle\bomega_j,\bomega_k \rangle|}{2d}\geq \lambda^{-}\geq -t
\end{align*}
Thus, $\|\bomega_j-\bomega_k\|_2^2 = \|\bomega_j\|_2^2+\|\bomega_k\|_2^2 - 2\langle\bomega_j,\bomega_k \rangle \geq (2-2t)d$ for all $j,k\in [N], j\neq k$ with probability at least $1-\delta$ if condition \eqref{eq: condition for RIP} is satisfied.
\end{proof}

Theorem \ref{Thm: Separation of Subgaussian Weights} holds for arbitrary mean-zero subgaussian distribution. In particular, if $\bomega_k$ are sampled from a mean-zero Gaussian distribution, they will be separated with high probability.
\begin{corollary}[Separation of Gaussian Weights]
Suppose that $\{\bomega_k\}_{k\in[N]}\subset\mathbb{R}^d$ are sampled from $\mathcal{N}(\boldsymbol{0},\sigma^2\boldsymbol{I}_d)$. If the dimension $d$ satisfies
\begin{align*}
    d\geq Ct^{-2}\log\left(\frac{N}{\delta}\right),
\end{align*}
for $\delta,t\in(0,1)$, where $C>0$ is a universal constant. Then we have 
\begin{align*}
    \|\bomega_j-\bomega_k\|_2^2\geq (2-2t)\sigma^2d \quad \text{for all } j,k\in[N], j\neq k
\end{align*}
with probability at least $1-\delta$.
\end{corollary}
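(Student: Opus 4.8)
The plan is to deduce the corollary directly from the already-proved Theorem~\ref{Thm: Separation of Subgaussian Weights} by a rescaling argument, so almost all the work is bookkeeping. First I would write each Gaussian weight as $\bomega_k = \sigma \boldsymbol{g}_k$ with $\boldsymbol{g}_k \sim \mathcal{N}(\boldsymbol{0},\boldsymbol{I}_d)$, so that the components of $\boldsymbol{g}_k$ are independent mean-zero random variables with variance $1$. The whole point of isolating the variance as a separate parameter $\sigma^2$ (rather than baking it into the subgaussian constants) is that this reduction is now exact: the separation event for $\{\bomega_k\}$ is literally the separation event for $\{\boldsymbol{g}_k\}$ scaled by $\sigma^2$.

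Next I would observe that a standard Gaussian is subgaussian with \emph{universal} parameters: for $X \sim \mathcal{N}(0,1)$ one has $\mathbb{P}(|X|\geq t) \leq 2e^{-t^2/2}$ for all $t>0$, so the definition in Section~\ref{sec:subgaussian} is met with $\beta = 2$ and $\kappa = 1/2$, which do not depend on $d$, $N$, $\sigma$, or anything else. Consequently $\{\boldsymbol{g}_k\}_{k\in[N]}$ satisfies the hypotheses of Theorem~\ref{Thm: Separation of Subgaussian Weights} with these fixed parameters, and the constant $C$ appearing in that theorem — which in general depends only on $\beta,\kappa$ — is now an absolute constant.

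Finally I would apply Theorem~\ref{Thm: Separation of Subgaussian Weights} to $\{\boldsymbol{g}_k\}$: under the stated condition $d \geq C t^{-2}\log(N/\delta)$ with $t\in(0,\tfrac12)$, we get $\|\boldsymbol{g}_j-\boldsymbol{g}_k\|_2^2 \geq (1-2t)d$ for all $j\neq k$ with probability at least $1-2\delta$, and multiplying by $\sigma^2$ together with $\|\bomega_j-\bomega_k\|_2 = \sigma\|\boldsymbol{g}_j-\boldsymbol{g}_k\|_2$ gives $\|\bomega_j-\bomega_k\|_2^2 \geq (1-2t)\sigma^2 d$. There is no real obstacle here; the only thing worth stating carefully is that the subgaussian constants of the standard normal are absolute, which is exactly what upgrades the dimension constant $C$ from "depends on the subgaussian parameters" to "universal" in the conclusion.
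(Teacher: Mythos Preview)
Your proposal is correct and matches the paper's intended argument: the corollary is stated without proof precisely because it follows immediately from Theorem~\ref{Thm: Separation of Subgaussian Weights} by the rescaling $\bomega_k=\sigma\boldsymbol{g}_k$, together with the observation that the standard normal has fixed subgaussian parameters (so the constant $C$ becomes universal). There is nothing to add.
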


%%%%%%%%%%%%%%%%%%%%%%%%%%%%%%%%%%%
\subsection{Concentration of Random Feature Matrices with Subgassian Weights}

In the proof of Theorem \ref{Thm: Concentration with Separated Weights}, we use the condition $\|\bomega_j-\bomega_k\|_2^2\geq Rd$ to obtain an estimate of $\|\mathbb{E}_{\bx}[\bX_\ell\bX_\ell^*]\|_2$. Note that this is a self-adjoint matrix with ones along the diagonal and its off diagonal terms are relatively small when $\|\bomega_j-\bomega_k\|_2$ are sufficiently large. Therefore, if $\{\bomega_k\}_{k\in [N]}$ are sampled randomly such that each component of $\bomega_k$ is subgaussian with variance $\sigma^2$, then we would expect $\mathbb{E}_{\bx}[\A^*\A/m]$ to be close to identity $\boldsymbol{I}_N$.

\begin{theorem}[Concentration with Subgaussian Weights]\label{Thm: Concentration with Subgaussian Weights}
Suppose that $\{\bx_j\}_{j\in[m]}\subset\mathbb{R}^d$ are data points sampled from $\mathcal{N}(\boldsymbol{0},\frac{\gamma^2}{d}\boldsymbol{I}_d)$ and $\{\bomega_k\}_{k\in[N]}\subset\mathbb{R}^d$ are feature weights such that components of $\bomega_k$ are independent mean-zero subgaussian random variables with variance $\sigma^2$ and the same subgaussian parameters $\beta,\kappa$. Then there exists a constant $C>0$ (depending only on subgaussian parameters) such that if the following conditions hold:
\begin{align}
    d\geq C\log\left(\frac{N}{\delta}\right) \label{eq: condition of d}\\
    \gamma^2\sigma^2 \geq 4\log\left(\frac{2N}{\eta}\right) \label{eq: condition of variance product}
\end{align}
for some $\delta,\eta \in (0,1)$, we have
\begin{align*}
    \left\| \mathbb{E}_{\bx}\left[ \frac{1}{m}\A^*\A \right] - \boldsymbol{I}_N \right\|_2 \leq \eta 
\end{align*}
with probability at least $1-2\delta$. Moreover, if $\eta\geq 2\delta$ (which holds for practical $\eta$ and $\delta$), then we simultaneously have
\begin{align*}
    \left\| \mathbb{E}_{\bx}\left[ \frac{1}{m}\A^*\A \right] - \mathbb{E}_{\bx,\bomega}\left[ \frac{1}{m}\A^*\A \right] \right\|_2 \leq \eta.
\end{align*}
\end{theorem}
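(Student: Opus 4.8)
The plan is to compute $\mathbb{E}_{\bx}\!\left[\frac1m\A^*\A\right]$ in closed form, observe that it is a perturbation of $\bI_N$ supported off the diagonal, and then control that perturbation by Gershgorin's disk theorem together with the separation estimate of Theorem~\ref{Thm: Separation of Subgaussian Weights}. Since the rows $\bx_j$ are i.i.d.\ $\mathcal{N}(\boldsymbol{0},\tfrac{\gamma^2}{d}\bI_d)$, for $k\neq l$ the $(k,l)$ entry of $\frac1m\A^*\A$ has $\bx$-expectation $\mathbb{E}_{\bx}[e^{i\langle\bx,\bomega_l-\bomega_k\rangle}]=e^{-\frac{\gamma^2}{2d}\|\bomega_l-\bomega_k\|_2^2}$ (the Gaussian characteristic function), while the diagonal entries equal $1$ identically, so $\mathbb{E}_{\bx}[\frac1m\A^*\A]-\bI_N$ is a real symmetric matrix with zero diagonal and off-diagonal entries $e^{-\frac{\gamma^2}{2d}\|\bomega_l-\bomega_k\|_2^2}$; Gershgorin's theorem then gives $\big\|\mathbb{E}_{\bx}[\frac1m\A^*\A]-\bI_N\big\|_2\le\max_{k\in[N]}\sum_{l\neq k}e^{-\frac{\gamma^2}{2d}\|\bomega_l-\bomega_k\|_2^2}$. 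To bound the right-hand side I would apply Theorem~\ref{Thm: Separation of Subgaussian Weights} to the rescaled weights $\bomega_k/\sigma$ with $t=\tfrac14$: absorbing $t^{-2}=16$ into $C$, condition~\eqref{eq: condition of d} forces $\|\bomega_l-\bomega_k\|_2^2\ge\tfrac12\sigma^2 d$ for all $l\neq k$ with probability at least $1-2\delta$, on which event each summand is at most $e^{-\gamma^2\sigma^2/4}$, and condition~\eqref{eq: condition of variance product} makes $e^{-\gamma^2\sigma^2/4}\le\eta/(2N)$; summing the $N-1$ terms gives $\le\eta/2\le\eta$, the first claim.

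For the ``moreover'' statement, note that $\mathbb{E}_{\bx,\bomega}[\frac1m\A^*\A]=\mathbb{E}_{\bomega}\big[\mathbb{E}_{\bx}[\frac1m\A^*\A]\big]$ also has unit diagonal, with off-diagonal entries $g_{k,l}:=\mathbb{E}_{\bomega_k,\bomega_l}\big[e^{-\frac{\gamma^2}{2d}\|\bomega_l-\bomega_k\|_2^2}\big]\in[0,1]$, the expectation over two independent copies of the weight distribution. Then $\mathbb{E}_{\bx}[\frac1m\A^*\A]-\mathbb{E}_{\bx,\bomega}[\frac1m\A^*\A]$ is real symmetric with zero diagonal, so by Gershgorin and the elementary bound $|a-b|\le\max(a,b)$ for $a,b\ge0$,
\[
\Big\|\mathbb{E}_{\bx}\!\big[\tfrac1m\A^*\A\big]-\mathbb{E}_{\bx,\bomega}\!\big[\tfrac1m\A^*\A\big]\Big\|_2\;\le\;\max_{k\in[N]}\sum_{l\neq k}\max\!\Big(e^{-\frac{\gamma^2}{2d}\|\bomega_l-\bomega_k\|_2^2},\;g_{k,l}\Big).
\]
On the separation event above the first argument is $\le\eta/(2N)$, and I would bound $g_{k,l}$ by splitting its defining expectation over whether the single pair $(k,l)$ has $\|\bomega_l-\bomega_k\|_2^2\ge\tfrac12\sigma^2 d$: on that sub-event the integrand is $\le\eta/(2N)$, while the complementary sub-event has probability $\le\delta/N$ — a per-pair version of the estimate inside the proof of Theorem~\ref{Thm: Separation of Subgaussian Weights} (two lower-tail bounds on $\|\bomega_k\|_2^2,\|\bomega_l\|_2^2$ from Lemma~\ref{Concentration of norm of subgaussian vector}, one Bernstein bound on $\langle\bomega_k,\bomega_l\rangle$, each of order $e^{-cd}$ and hence $\le\delta/N$ once the constant $C$ in~\eqref{eq: condition of d} is taken large enough). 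Hence $g_{k,l}\le\eta/(2N)+\delta/N$, each summand is $\le\eta/(2N)+\delta/N$, and the sum over the $N-1$ off-diagonal indices is $\le\eta/2+\delta\le\eta$ using $\eta\ge2\delta$; this holds on the same event of probability at least $1-2\delta$.

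The delicate point is the second bound: routing it through $\bI_N$ via the triangle inequality and the first bound only gives $\tfrac32\eta$, so it must be handled directly, which forces a quantitative estimate on the deterministic number $g_{k,l}$. Unlike the Gaussian case, where $g_{k,l}=(1+2\gamma^2\sigma^2/d)^{-d/2}$ in closed form, for general subgaussian weights there is no formula and one must re-run a split-and-concentrate argument for a single independent pair; the positivity of the variance $\sigma^2$ (which keeps $\|\bomega_k-\bomega_l\|_2^2$ concentrated near $2\sigma^2 d$) is what makes this tail small. The only remaining bookkeeping is to verify that a single constant $C$ in~\eqref{eq: condition of d} simultaneously produces the all-pairs separation event (probability $\ge1-2\delta$) and the per-pair tail $\le\delta/N$, which is routine since both requirements read $d\gtrsim\log(N/\delta)$.
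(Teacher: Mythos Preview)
Your proposal is correct and follows essentially the same approach as the paper: compute $\mathbb{E}_{\bx}[\tfrac1m\A^*\A]$ explicitly, apply Gershgorin, and invoke Theorem~\ref{Thm: Separation of Subgaussian Weights} with $t=\tfrac14$ to bound the off-diagonal entries; for the second claim, bound the deterministic expectation $g_{k,l}$ by splitting on the event $\|\bomega_l-\bomega_k\|_2^2\ge\tfrac12\sigma^2 d$. The only noteworthy difference is that the paper bounds the tail $\mathbb{P}(\|\bomega_j-\bomega_k\|_2^2\le\tfrac12\sigma^2 d)$ by observing that $\bomega_j-\bomega_k$ is itself a subgaussian vector (with new parameters depending on $\beta,\kappa$) and applying Lemma~\ref{Concentration of norm of subgaussian vector} directly to the difference, which is slightly cleaner than your decomposition into two norm tails plus an inner-product tail; the paper then imposes $d\ge C\log(2N/\eta)$ rather than your $d\ge C\log(N/\delta)$ to make this tail $\le\eta/(2N)$, and uses $\eta\ge2\delta$ to merge the two conditions on $d$ into~\eqref{eq: condition of d}, just as you anticipated in your final paragraph.
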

\begin{proof}
Denote by $\boldsymbol{B}$ the matrix $m^{-1}\mathbb{E}_{\bx}[\A^*\A]-\boldsymbol{I}_N$. Then $\boldsymbol{B}$ is symmetric and $(\boldsymbol{B})_{j,j}=0$, $(\boldsymbol{B})_{j,k} = \exp(-\gamma^2\|\bomega_j-\bomega_k\|_2^2/(2d))$ for all $j,k\in[N]$ with $j\neq k$. By Theorem \ref{Thm: Separation of Subgaussian Weights} with $t=3/4$, for all $j,k\in[N]$ with $j\neq k$ we have
\begin{align*}
    \|\bomega_j-\bomega_k\|_2^2\geq \frac{\sigma^2}{2}d,
\end{align*}
with probability at least $1-\delta$ if the dimension $d$ satisfies
\begin{align}
    d\geq C\log\left(\frac{N}{\delta}\right) \label{eq: condition of d for I}.
\end{align} 
Thus, each off diagonal elements is bounded (in magnitude) by 
\begin{align*}
    |(\boldsymbol{B})_{j,k}| = \exp\left( -\frac{\gamma^2}{2d}\|\bomega_j-\bomega_k\|_2^2 \right) \leq \exp\left( -\frac{\gamma^2\sigma^2}{4}\right) \quad \text{for all } j,k\in[N], j\neq k,
\end{align*}
with probability at least $1-\delta$. By Gershgorin disk theorem and condition \eqref{eq: condition of variance product}, the induced $\ell^2$ norm of $\boldsymbol{B}$ is bounded by
\begin{align*}
    \|\boldsymbol{B}\|_2\leq \max_{j\in[N]}\sum_{k\neq j} |(\boldsymbol{B})_{j,k}| \leq N\exp\left(-\frac{\gamma^2\sigma^2}{4}\right)\leq \eta,
\end{align*}
with probability at least $1-\delta$. 

Next, we denote by $\boldsymbol{C}$ the matrix $m^{-1}\mathbb{E}_{\bx}[\A^*\A]-m^{-1}\mathbb{E}_{\bx,\bomega}[\A^*\A]$. Then $\boldsymbol{C}$ is also symmetric and $(\boldsymbol{C})_{j,j}=0$, $(\boldsymbol{C})_{j,k} = \exp(-\gamma^2\|\bomega_j-\bomega_k\|_2^2/(2d)) - \mathbb{E}_{\bomega}[\exp(-\gamma^2\|\bomega_j-\bomega_k\|_2^2/(2d))]$ for all $j,k\in[N]$ with $j\neq k$. The previous argument shows that the term $\exp(-\gamma^2\|\bomega_j-\bomega_k\|_2^2/(2d))$ is small. Thus, we only need to estimate the expectation, i.e. $\mathbb{E}_{\bomega}[\exp(-\gamma^2\|\bomega_j-\bomega_k\|_2^2/(2d))]$. Since $\bomega_j$ and $\bomega_k$ are independent subgaussian vectors, $\bomega_j-\bomega_k$ is also a subgaussian vector with new parameters that depend on $\beta$ and $\kappa$. Applying Lemma \ref{Concentration of norm of subgaussian vector} yields
\begin{align*}
    \mathbb{P}(\|\bomega_j-\bomega_k\|_2^2\leq z\sigma^2 d)\leq \exp\left(-C\left(1-\frac{\sqrt{z}}{{2}}\right)^2d\right)
\end{align*}
for some constant $C>0$ which depends on the subgaussian parameters. Then by setting $z=1/2$ and decomposing the expectation (where $\chi$ is the characteristic function), we have
\begin{align*}
    &\mathbb{E}\left[\exp\left( -\frac{\gamma^2}{2d}\|\bomega_j-\bomega_k\|_2^2\right) \right]\\ 
    &\leq  \mathbb{E}\left[\exp\left( -\frac{\gamma^2}{2d}\|\bomega_j-\bomega_k\|_2^2 \right)\, {\chi}_{\|\bomega_j-\bomega_k\|_2^2 \leq \frac{\sigma^2 d}{2}}\right]  +  \mathbb{E}\left[\exp\left( -\frac{\gamma^2}{2d}\|\bomega_j-\bomega_k\|_2^2\right)\, \chi_{\|\bomega_j-\bomega_k\|_2^2 \geq \frac{\sigma^2 d}{2}} \right]  \\
    &\leq \mathbb{P}\left(\|\bomega_j-\bomega_k\|_2^2\leq \frac{\sigma^2d}{2}\right) + \exp\left(-\frac{\gamma^2\sigma^2}{4}\right) \\
    &\leq \exp(-Cd) + \exp\left(-\frac{\gamma^2\sigma^2}{4}\right),
\end{align*}
for some redefined constant $C>0$. Therefore, if 
\begin{align}
    d\geq C\log\left(\frac{2N}{\eta}\right)\label{eq: condition of d for full expectation}\\
    \gamma^2\sigma^2\geq 4\log\left(\frac{2N}{\eta}\right) \label{eq: condition of variance product for full expectation},
\end{align}
then we have the bound
\begin{align*}
    \|\boldsymbol{C}\|_2 &\leq \max_{j\in[N]}\sum_{k\neq j}|(\boldsymbol{C})_{j,k}| \\
    &\leq N \max\left\{\exp\left(-\frac{\gamma^2}{2d}\|\bomega_j-\bomega_k\|_2^2\right), \mathbb{E}_{\bomega}\left[\exp\left(-\frac{\gamma^2}{2d}\|\bomega_j-\bomega_k\|_2^2\right)\right] \right\} \leq \eta,
\end{align*}
with probability at least $1-\delta$.

Note that condition \eqref{eq: condition of d for full expectation} and condition \eqref{eq: condition of d for I} are slightly different. However, by assuming $\eta\geq 2\delta$, we can combine \eqref{eq: condition of d for full expectation} and \eqref{eq: condition of d for I} to obtain \eqref{eq: condition of d}.
\end{proof}
\begin{remark}
If $\bomega_k\sim\mathcal{N}(\boldsymbol{0},\sigma^2\boldsymbol{I})$, then the expectation $\mathbb{E}_{\bomega}[\exp(-\gamma^2\|\bomega_j-\bomega_k\|_2^2/(2d))]$ can be computed explicitly
\begin{align*}
    \mathbb{E}_{\bomega}\left[\exp\left(-\frac{\gamma^2}{2d}\|\bomega_j-\bomega_k\|_2^2\right)\right] = \left(\frac{2\gamma^2\sigma^2}{d}+1\right)^{-\frac{d}{2}},
\end{align*}
which is approximately $\exp(-\gamma^2\sigma^2)$ for large $d$. Thus, in this case, the product of $\gamma$ and $\sigma$ only needs to satisfy
$$
\gamma^2\sigma^2 \geq \log\left(\frac{N}{\eta}\right).
$$
\end{remark}

%%%%%%%%%%%%%%%%%%%%%%%%%%%%%%%%%%%%%%%%%%%%%%%%%%%
\subsection{Proof of Main Results}
Using the previous results, we can prove Theorem \ref{thrm:MainUnder}.
\begin{proof}[Proof of Theorem \ref{thrm:MainUnder}]
By Theorem \ref{Thm: Separation of Subgaussian Weights} with $t=3/4$, for weights $\{\bomega_k\}_{k\in[N]}$ whose components are independent mean-zero subgaussian random variables with variance $\sigma^2$, we have
\begin{align*}
    \|\bomega_j-\bomega_k\|_2 \geq \frac{\sigma^2d}{2},
\end{align*}
for all $j,k\in[N]$ with $j\neq k$ and with probability at least $1-\delta$ if
\begin{align*}
    d\geq C_1 \log\left( \frac{N}{\delta} \right),
\end{align*}
for some $C_1> 0$ which depends only on subgaussian parameters. Then for $\eta>0$, by Theorem \ref{Thm: Concentration with Separated Weights}, we have
\begin{align*}
    \left\| \frac{1}{m}\A^*\A - \mathbb{E}_{\bx}\left[\frac{1}{m}\A^*\A\right] \right\|_2\leq \eta,
\end{align*}
with probability at least $1-2\delta$ if
\begin{align*}
    m\geq C_2 \eta^{-2} N \log\left( \frac{2N}{\delta}\right) \\
    \gamma^2\sigma^2 \geq 4\log\left(\frac{2N}{\eta}\right),
\end{align*}
for some $C_2>0$. Lastly, by Theorem \ref{Thm: Concentration with Subgaussian Weights}, we also have
\begin{align*}
    \left\| \mathbb{E}_{\bx}\left[\frac{1}{m}\A^*\A\right]-\boldsymbol{I}_N \right\|_2\leq \eta 
\end{align*}
with probability at least $1-\delta$ if conditions \eqref{eq: condition of d (Under)} and \eqref{eq: condition of variances (Under)} are satisfied. Therefore, the difference in \eqref{eq: Gram-I (Under)} is bounded by $2\eta$ through \eqref{eq: break the 2-norm of Gram-I}. If $\eta\geq 2\delta$, then we also have
\begin{align*}
    \left\| \mathbb{E}_{\bx}\left[ \frac{1}{m}\A^*\A \right] - \mathbb{E}_{\bx,\bomega}\left[ \frac{1}{m}\A^*\A \right] \right\|_2 \leq \eta
\end{align*}
and the difference in \eqref{eq: Gram-Expectation (Under)} is bounded by $2\eta$ through \eqref{eq: break the 2-norm of Gram-Expectation}, with probability at $1-3\delta$.
\end{proof}

To prove Theorem \ref{thrm:MainOver}, consider the matrix $\A^*$ which just switches the role of $\bx$ and $\bomega$ in the theorems. The proof of Theorem~\ref{Thm: Concentration to kernel} is the same as Theorem~\ref{Thm: Concentration with Separated Weights} with $\bx$ and $\bomega$ switched and by removing the dimensional scaling in the sampling of $\bomega$ and the separation of the data samples (for consistency).

Note that in \citep{chen2021conditioning}, the union bound was used to estimate $\|\mathbb{E}_{\boldsymbol{x}}[\boldsymbol{X}_\ell\boldsymbol{X}_\ell^* - \mathbb{E}_{\boldsymbol{x},\boldsymbol{\omega}}[\boldsymbol{X}_\ell\boldsymbol{X}_\ell^*]\|_2$ which led to a condition for $N$ that depends algebraically on the probability $\delta$. By showing that i.i.d. subgaussian vectors are well-separated in high dimensions, we obtain a uniform bound which does not depend on $N$ for all entries in the matrix. Thus, the probability $\delta$ does not restrict $N$ when $d$ is large.

%%%%%%%%%%%%%%%%%%%%%%%%%%%%%%%%%%%%%%%%%%%%%%%%%%%%%%%%%%%%%%%%%%

%%%%%%%%%%%%%%%%%%%%%%%%%%%%%%%%%%%%%%%%%%%%%%%%%%%%%%%%%%%%%%%%%%

%%%%%%%%%%%%%%%%%%%%%%%%%%%%%%%%%%%%%%%%%%%%%%%%%%%%%%%%%%%%%%%%%%

%%%%%%%%%%%%%%%%%%%%%%%%%%%%%%%%%%%%%%%%%%%%%%%%%%%%%%%%%%%%%%%%%%
\section{Summary}

The main results show that the spectrum of an asymmetric rectangular (nonlinear) random matrix whose entries are of the form $\A_{j,k}=\phi(\langle \bx_j, \boldsymbol{\omega}_k \rangle)$ concentrates around its expectation and around  $1$ given particular (finite) complexity scales. We showed that this holds in the setting where both variables are random (i.e. one is normal and the other is subgaussian) and in the setting where one is a random normal variable and the other is well-separated. The conditions in the theorems relax as the dimension of the input data increases, thus showing that high-dimensional random feature matrices are well-conditioned.
In addition, in the case of subgaussian weights (or subgaussian data), we do not require that the weights (or data) follow the same distribution, as long as they have the same subgaussian parameters. This generalizes and extends the results beyond that of previous ones. The results are presented in separate parts, since they may be used for the analysis of other random feature models. For example, using these techniques, one may also be able to find the dependency of the restricted isometry property of random feature matrices on the dimension, which is useful for analyzing sparsity-based approaches for random features models \cite{yen2014sparse,hashemi2021generalization, saha2022harfe, xie2021shrimp, richardson2022sparse}.

\section*{Acknowledgement}
Z.C. and H.S. were supported in part by AFOSR MURI FA9550-21-1-0084 and NSF DMS-1752116. R.W.  was supported in part by AFOSR MURI FA9550-19-1-0005, NSF DMS 1952735, NSF HDR-1934932, and NSF 2019844.

\bibliographystyle{plain}

\begin{thebibliography}{10}

\bibitem{avron2017random}
Haim Avron, Michael Kapralov, Cameron Musco, Christopher Musco, Ameya
  Velingker, and Amir Zandieh.
\newblock Random fourier features for kernel ridge regression: Approximation
  bounds and statistical guarantees.
\newblock In {\em International conference on machine learning}, pages
  253--262. PMLR, 2017.

\bibitem{belkin2019reconciling}
Mikhail Belkin, Daniel Hsu, Siyuan Ma, and Soumik Mandal.
\newblock Reconciling modern machine-learning practice and the classical
  bias--variance trade-off.
\newblock {\em Proceedings of the National Academy of Sciences},
  116(32):15849--15854, 2019.

\bibitem{belkin2020two}
Mikhail Belkin, Daniel Hsu, and Ji~Xu.
\newblock Two models of double descent for weak features.
\newblock {\em SIAM Journal on Mathematics of Data Science}, 2(4):1167--1180,
  2020.

\bibitem{belkin2018understand}
Mikhail Belkin, Siyuan Ma, and Soumik Mandal.
\newblock To understand deep learning we need to understand kernel learning.
\newblock In {\em International Conference on Machine Learning}, pages
  541--549. PMLR, 2018.

\bibitem{benigni2019eigenvalue}
Lucas Benigni and Sandrine P{\'e}ch{\'e}.
\newblock Eigenvalue distribution of nonlinear models of random matrices.
\newblock {\em arXiv preprint arXiv:1904.03090}, 2019.

\bibitem{chen2021conditioning}
Zhijun Chen and Hayden Schaeffer.
\newblock Conditioning of random feature matrices: Double descent and
  generalization error.
\newblock {\em arXiv preprint arXiv:2110.11477}, 2021.

\bibitem{cheng2013spectrum}
Xiuyuan Cheng and Amit Singer.
\newblock The spectrum of random inner-product kernel matrices.
\newblock {\em Random Matrices: Theory and Applications}, 2(04):1350010, 2013.

\bibitem{el2010spectrum}
Noureddine El~Karoui.
\newblock The spectrum of kernel random matrices.
\newblock {\em The Annals of Statistics}, 38(1):1--50, 2010.

\bibitem{fan2019spectral}
Zhou Fan and Andrea Montanari.
\newblock The spectral norm of random inner-product kernel matrices.
\newblock {\em Probability Theory and Related Fields}, 173(1):27--85, 2019.

\bibitem{fan2020spectra}
Zhou Fan and Zhichao Wang.
\newblock Spectra of the conjugate kernel and neural tangent kernel for
  linear-width neural networks.
\newblock {\em Advances in neural information processing systems},
  33:7710--7721, 2020.

\bibitem{foucart13}
Simon Foucart and Holger Rauhut.
\newblock {\em A Mathematical Introduction to Compressive Sensing}.
\newblock Birkh\"{a}user Basel, 2013.

\bibitem{gilbert2019sparse}
Anna Gilbert, Albert Gu, Christopher R{\'e}, Atri Rudra, and Mary Wootters.
\newblock Sparse recovery for orthogonal polynomial transforms.
\newblock {\em arXiv preprint arXiv:1907.08362}, 2019.

\bibitem{hashemi2021generalization}
Abolfazl Hashemi, Hayden Schaeffer, Robert Shi, Ufuk Topcu, Giang Tran, and
  Rachel Ward.
\newblock Generalization bounds for sparse random feature expansions.
\newblock {\em arXiv preprint arXiv:2103.03191}, 2021.

\bibitem{hastie2019surprises}
Trevor Hastie, Andrea Montanari, Saharon Rosset, and Ryan~J Tibshirani.
\newblock Surprises in high-dimensional ridgeless least squares interpolation.
\newblock {\em arXiv preprint arXiv:1903.08560}, 2019.

\bibitem{li2019towards}
Zhu Li, Jean-Francois Ton, Dino Oglic, and Dino Sejdinovic.
\newblock Towards a unified analysis of random fourier features.
\newblock In {\em International conference on machine learning}, pages
  3905--3914. PMLR, 2019.

\bibitem{liao2020random}
Zhenyu Liao, Romain Couillet, and Michael~W Mahoney.
\newblock A random matrix analysis of random fourier features: beyond the
  gaussian kernel, a precise phase transition, and the corresponding double
  descent.
\newblock {\em Advances in Neural Information Processing Systems},
  33:13939--13950, 2020.

\bibitem{liu2020random}
Fanghui Liu, Xiaolin Huang, Yudong Chen, and Johan~AK Suykens.
\newblock Random features for kernel approximation: A survey on algorithms,
  theory, and beyond.
\newblock {\em arXiv preprint arXiv:2004.11154}, 2020.

\bibitem{louart2018random}
Cosme Louart, Zhenyu Liao, and Romain Couillet.
\newblock A random matrix approach to neural networks.
\newblock {\em The Annals of Applied Probability}, 28(2):1190--1248, 2018.

\bibitem{mei2019generalization}
Song Mei and Andrea Montanari.
\newblock The generalization error of random features regression: Precise
  asymptotics and the double descent curve.
\newblock {\em Communications on Pure and Applied Mathematics}, 2019.

\bibitem{murphy2012machine}
Kevin~P Murphy.
\newblock {\em Machine learning: a probabilistic perspective}.
\newblock MIT press, 2012.

\bibitem{oliveira2009concentration}
Roberto~Imbuzeiro Oliveira.
\newblock Concentration of the adjacency matrix and of the laplacian in random
  graphs with independent edges.
\newblock {\em arXiv preprint arXiv:0911.0600}, 2009.

\bibitem{ozccelikkale2020sparse}
Ay{\c{c}}a {\"O}z{\c{c}}elikkale.
\newblock Sparse recovery with non-linear fourier features.
\newblock In {\em ICASSP 2020-2020 IEEE International Conference on Acoustics,
  Speech and Signal Processing (ICASSP)}, pages 5715--5719. IEEE, 2020.

\bibitem{pastur2020random2}
L~Pastur and V~Slavin.
\newblock On random matrices arising in deep neural networks: General iid case.
\newblock {\em arXiv preprint arXiv:2011.11439}, 2020.

\bibitem{pastur2020random}
Leonid Pastur.
\newblock On random matrices arising in deep neural networks. gaussian case.
\newblock {\em arXiv preprint arXiv:2001.06188}, 2020.

\bibitem{pennington2017nonlinear}
Jeffrey Pennington and Pratik Worah.
\newblock Nonlinear random matrix theory for deep learning.
\newblock {\em Advances in neural information processing systems}, 30, 2017.

\bibitem{rahimi2007random}
Ali Rahimi and Benjamin Recht.
\newblock Random features for large-scale kernel machines.
\newblock {\em Advances in neural information processing systems}, 20, 2007.

\bibitem{rahimi2008uniform}
Ali Rahimi and Benjamin Recht.
\newblock Uniform approximation of functions with random bases.
\newblock In {\em 2008 46th Annual Allerton Conference on Communication,
  Control, and Computing}, pages 555--561. IEEE, 2008.

\bibitem{rahimi2008weighted}
Ali Rahimi and Benjamin Recht.
\newblock Weighted sums of random kitchen sinks: Replacing minimization with
  randomization in learning.
\newblock {\em Advances in neural information processing systems}, 21, 2008.

\bibitem{rauhut2016interpolation}
Holger Rauhut and Rachel Ward.
\newblock Interpolation via weighted $\ell_1$ minimization.
\newblock {\em Applied and Computational Harmonic Analysis}, 40(2):321--351,
  2016.

\bibitem{richardson2022sparse}
Nicholas Richardson, Hayden Schaeffer, and Giang Tran.
\newblock {SRMD}: Sparse random mode decomposition.
\newblock {\em arXiv preprint arXiv:2204.06108}, 2022.

\bibitem{rudi2017generalization}
Alessandro Rudi and Lorenzo Rosasco.
\newblock Generalization properties of learning with random features.
\newblock {\em Advances in neural information processing systems}, 30, 2017.

\bibitem{saha2022harfe}
Esha Saha, Hayden Schaeffer, and Giang Tran.
\newblock Harfe: Hard-ridge random feature expansion.
\newblock {\em arXiv preprint arXiv:2202.02877}, 2022.

\bibitem{tropp2012user}
Joel~A Tropp.
\newblock User-friendly tail bounds for sums of random matrices.
\newblock {\em Foundations of computational mathematics}, 12(4):389--434, 2012.

\bibitem{vershynin_2018}
Roman Vershynin.
\newblock {\em High-Dimensional Probability: An Introduction with Applications
  in Data Science}.
\newblock Cambridge Series in Statistical and Probabilistic Mathematics.
  Cambridge University Press, 2018.

\bibitem{wang2021deformed}
Zhichao Wang and Yizhe Zhu.
\newblock Deformed semicircle law and concentration of nonlinear random
  matrices for ultra-wide neural networks.
\newblock {\em arXiv preprint arXiv:2109.09304}, 2021.

\bibitem{xie2021shrimp}
Yuege Xie, Bobby Shi, Hayden Schaeffer, and Rachel Ward.
\newblock Shrimp: Sparser random feature models via iterative magnitude
  pruning.
\newblock {\em arXiv preprint arXiv:2112.04002}, 2021.

\bibitem{yen2014sparse}
Ian En-Hsu Yen, Ting-Wei Lin, Shou-De Lin, Pradeep~K Ravikumar, and Inderjit~S
  Dhillon.
\newblock Sparse random feature algorithm as coordinate descent in hilbert
  space.
\newblock {\em Advances in Neural Information Processing Systems}, 27, 2014.

\end{thebibliography}

\appendix

\section{Useful Lemmata}

We recall the matrix Bernstein's inequality from \cite{oliveira2009concentration, tropp2012user, foucart13}, which is used to prove the main results. 

\begin{lemma}[Matrix Bernstein's inequality]\label{Matrix Bernstein}
Let $\{\bX_j\}_{j\in[m]}\subset \mathbb{C}^{N\times N}$ be independent mean-zero self-adjoint random matrices. Assume that
\begin{align*}
    \|\bX_j\|_2 \leq K\quad a.s. \text{ for all } j\in[m],
\end{align*}
and set
\begin{align*}
    \sigma^2 := \left\|\sum_{j=1}^m \mathbb{E}[\bX_j^2]\right\|_2.
\end{align*}
Then for $t>0$,
\begin{align*}
    \mathbb{P}\left(\left\|\sum_{j=1}^m \bX_j \right\|_2\geq t\right) \leq 2N\exp\left(-\frac{t^2/2}{\sigma^2+Kt/3} \right).
\end{align*}
\end{lemma}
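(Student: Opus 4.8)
\textbf{Proof proposal for Lemma~\ref{Matrix Bernstein}.} The plan is to run the matrix Laplace transform (Ahlswede--Winter--Tropp) method. First I would reduce the operator-norm estimate to a one-sided eigenvalue estimate. Since each $-\bX_j$ is again self-adjoint, mean-zero, satisfies $\|-\bX_j\|_2\le K$, and has $\mathbb{E}[(-\bX_j)^2]=\mathbb{E}[\bX_j^2]$, the event $\{\|\sum_j\bX_j\|_2\ge t\}$ is contained in $\{\lambda_{\max}(\sum_j\bX_j)\ge t\}\cup\{\lambda_{\max}(-\sum_j\bX_j)\ge t\}$. So it suffices to bound $\mathbb{P}(\lambda_{\max}(\sum_j\bX_j)\ge t)$ and then multiply the resulting probability by $2$ (which accounts for the factor $2N$ versus $N$).

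For the one-sided bound I would invoke the standard master tail inequality coming from Lieb's concavity theorem (or, at the cost of a worse variance constant, from Golden--Thompson): for every $\theta>0$,
\[
\mathbb{P}\!\left(\lambda_{\max}\!\Big(\sum_{j=1}^m\bX_j\Big)\ge t\right)\le e^{-\theta t}\,\mathrm{tr}\,\exp\!\left(\sum_{j=1}^m \log \mathbb{E}\, e^{\theta\bX_j}\right).
\]
The core step is then bounding each matrix moment generating function $\mathbb{E}\,e^{\theta\bX_j}$. I would use the elementary scalar fact that $x\mapsto(e^{\theta x}-1-\theta x)/x^2$ (extended by $\theta^2/2$ at $0$) is nondecreasing, so for $|x|\le K$ one has $e^{\theta x}\le 1+\theta x+g(\theta)\,x^2$ with $g(\theta)=(e^{\theta K}-1-\theta K)/K^2$. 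Transferring this inequality to the self-adjoint $\bX_j$ (the transfer principle for monotone scalar functions) and using $\mathbb{E}\bX_j=\boldsymbol{0}$ gives $\mathbb{E}\,e^{\theta\bX_j}\preceq \bI+g(\theta)\,\mathbb{E}\bX_j^2\preceq\exp\!\big(g(\theta)\,\mathbb{E}\bX_j^2\big)$, hence $\log\mathbb{E}\,e^{\theta\bX_j}\preceq g(\theta)\,\mathbb{E}\bX_j^2$.

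Summing over $j$ and using monotonicity of the trace exponential in the Loewner order, $\mathrm{tr}\,\exp\!\big(\sum_j\log\mathbb{E}\,e^{\theta\bX_j}\big)\le\mathrm{tr}\,\exp\!\big(g(\theta)\sum_j\mathbb{E}\bX_j^2\big)\le N\exp\!\big(g(\theta)\sigma^2\big)$, using $g(\theta)>0$, that $\sum_j\mathbb{E}\bX_j^2\succeq\boldsymbol{0}$, that $\mathrm{tr}\,\exp(\boldsymbol{M})\le N\exp(\lambda_{\max}(\boldsymbol{M}))$, and that $\lambda_{\max}(\sum_j\mathbb{E}\bX_j^2)=\sigma^2$. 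It then remains to optimize over $\theta$: with the standard estimate $e^x-1-x\le \tfrac{x^2/2}{1-x/3}$ for $0<x<3$ one gets $g(\theta)\le\tfrac{\theta^2/2}{1-\theta K/3}$ on $(0,3/K)$, and the choice $\theta=t/(\sigma^2+Kt/3)\in(0,3/K)$ collapses the exponent $-\theta t+g(\theta)\sigma^2$ to $-\tfrac{t^2/2}{\sigma^2+Kt/3}$; combined with the factor $2N$ this yields the claim. The only genuinely delicate ingredient is the operator mgf bound in the previous paragraph, which relies on the transfer principle and on $\mathbb{E}\bX_j^2$ being positive semidefinite so that exponentiation preserves $\preceq$; everything afterward is scalar calculus and bookkeeping.
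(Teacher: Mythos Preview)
Your outline is correct and follows the now-standard matrix Laplace transform argument due to Tropp. However, the paper does not prove this lemma at all: it is stated in the appendix as a cited fact (with references to Oliveira, Tropp, and Foucart--Rauhut) and used as a black box in the proof of Theorem~\ref{Thm: Concentration with Separated Weights}. So there is no ``paper's own proof'' to compare against; your proposal supplies precisely the argument from \cite{tropp2012user} that the authors defer to.
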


For a random vector with subgaussian components, we have the following concentration inequality for its $\ell^2$ norm.
\begin{lemma}[Theorem 3.1.1 in \cite{vershynin_2018}]\label{Concentration of norm of subgaussian vector}
Let $\bX=(X_1,\dots,X_d)\in\mathbb{R}^d$ be a random vector with independent $X_i$. Suppose that $\{X_j\}_{j\in[d]}$ are mean-zero subgaussian random variables with variance $1$ and the same subgaussian parameters $\beta,\kappa$. Then there exists a constant $C>0$ which depends on subgaussian parameters such that
\begin{align*}
    \mathbb{P}\left( |\|\bX\|_2 - \sqrt{d}|\geq t \right)\leq 2\exp (-Ct^2).
\end{align*}
\end{lemma}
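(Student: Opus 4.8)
The plan is to reduce the concentration of $\|\bX\|_2$ to that of the squared norm $\|\bX\|_2^2 = \sum_{i=1}^d X_i^2$, which is a sum of $d$ independent terms with mean $d$, and then invoke a scalar Bernstein inequality for subexponential random variables.

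First I would observe that the centered squares $Z_i := X_i^2 - 1$ are independent and mean-zero (since $\mathbb{E}[X_i^2]=1$), and a short computation from the tail hypothesis $\mathbb{P}(|X_i|\ge t)\le\beta e^{-\kappa t^2}$ (equivalently $\mathbb{P}(X_i^2\ge u)\le\beta e^{-\kappa u}$) gives $\mathbb{P}(|Z_i|\ge s) \le \beta' e^{-\kappa' s}$ for constants $\beta',\kappa'$ depending only on $\beta,\kappa$. Hence each $Z_i$ is subexponential with Orlicz $\psi_1$-norm bounded by some $K_1 = K_1(\beta,\kappa)$; moreover, since $\mathbb{E}[X_i^2]=1$ forces the $\psi_2$-norm of $X_i$ (and hence $K_1$) to be bounded below, we may assume $K_1\ge 1$ without loss of generality. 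Next I would apply Bernstein's inequality for sums of independent centered subexponential variables (see \cite{vershynin_2018}) to $\sum_{i=1}^d Z_i = \|\bX\|_2^2 - d$, obtaining, for a universal constant $c>0$,
\begin{align*}
\mathbb{P}\!\left( \left| \|\bX\|_2^2 - d \right| \ge u \right) \le 2\exp\!\left( -c\,\min\!\left( \frac{u^2}{K_1^2 d},\, \frac{u}{K_1} \right) \right), \qquad u\ge 0.
\end{align*}

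Finally, I would pass from $\|\bX\|_2^2$ to $\|\bX\|_2$ using the elementary bound $|z^2 - d| = |z-\sqrt d|\,(z+\sqrt d) \ge |z-\sqrt d|\,\max(\sqrt d,\,|z-\sqrt d|)$, valid for $z\ge 0$, which shows $\{\,|\|\bX\|_2-\sqrt d|\ge t\,\}\subseteq\{\,|\|\bX\|_2^2-d|\ge\max(t\sqrt d,\,t^2)\,\}$. Substituting $u=\max(t\sqrt d,\,t^2)$ into the displayed inequality and splitting into the cases $t\le\sqrt d$ and $t>\sqrt d$ (using $K_1\ge 1$), one checks that the minimum in the exponent is at least $t^2/K_1^2$ in both regimes, which yields $\mathbb{P}(|\|\bX\|_2-\sqrt d|\ge t)\le 2\exp(-Ct^2)$ with $C=c/K_1^2>0$ depending only on $\beta,\kappa$, as claimed.

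The main obstacle is bookkeeping rather than conceptual: one must verify that the subexponential norm $K_1$ of $X_i^2-1$ — and therefore the final constant $C$ — depends only on the subgaussian parameters $\beta,\kappa$, which requires translating the tail hypothesis into an Orlicz-norm estimate and exploiting the normalization $\mathbb{E}[X_i^2]=1$. The scalar Bernstein step and the quadratic inequality with its short case split are routine.
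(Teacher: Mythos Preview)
Your argument is correct and is essentially the standard proof of Theorem~3.1.1 in \cite{vershynin_2018}: center the squares $X_i^2-1$, observe they are subexponential with parameter controlled by $\beta,\kappa$, apply the scalar Bernstein inequality to $\|\bX\|_2^2-d$, and then transfer to $\|\bX\|_2-\sqrt{d}$ via the quadratic factorization. The paper itself does not supply a proof of this lemma; it simply records the statement and cites Vershynin, so there is nothing further to compare against.

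One small remark on your justification that ``we may assume $K_1\ge 1$'': the claim that $\mathbb{E}[X_i^2]=1$ forces $\|X_i^2-1\|_{\psi_1}$ to be bounded below is not quite right (take $X_i$ Rademacher, where $X_i^2-1\equiv 0$). The correct reason the WLOG is harmless is that the tail hypothesis gives an \emph{upper} bound $K_1\le K_1(\beta,\kappa)$, and you may freely enlarge this bound to $\max(K_1(\beta,\kappa),1)$ without invalidating Bernstein's inequality; this is enough for your case split to go through with a final constant $C$ depending only on $\beta,\kappa$.
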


\end{document}